\newtheorem{theorem}{Theorem}
\begin{document}

\title{Cross-view Joint Learning for Mixed-Missing Multi-view Unsupervised Feature Selection}

\author{Zongxin Shen, Yanyong Huang, Dongjie Wang, Jinyuan Chang, Fengmao~Lv, Tianrui Li,~\IEEEmembership{Senior Member,~IEEE}, and Xiaoyi Jiang,~\IEEEmembership{Senior Member,~IEEE}

\thanks{Zongxin~Shen, Yanyong~Huang, and Jinyuan Chang are with the Joint Laboratory of Data Science and Business Intelligence, School of Statistics and Data Science, Southwestern University of Finance and Economics, Chengdu 611130, China (e-mail: zxshen@smail.swufe.edu.cn; huangyy@swufe.edu.cn;  changjinyuan@swufe.edu.cn), Yanyong Huang is the corresponding author;}

\thanks{Dongjie Wang is with the Department of Electrical Engineering and Computer Science, University of Kansas, Lawrence, KS 66045, USA (e-mail: wangdongjie@ku.edu);}

\thanks{Fengmao~Lv and Tianrui Li are with the School of Computing and Artificial Intelligence, Southwest Jiaotong University, Chengdu 611756, China (e-mail: fengmaolv@126.com; trli@swjtu.edu.cn);}

\thanks{Xiaoyi Jiang is with the Faculty of Mathematics and Computer Science, University of Münster, Münster 48149, Germany (e-mail: xjiang@uni-muenster.de).}}

\markboth{Journal of \LaTeX\ Class Files,~Vol.~14, No.~8, August~2021}%
{Shell \MakeLowercase{\textit{et al.}}: A Sample Article Using IEEEtran.cls for IEEE Journals}

\IEEEpubid{0000--0000/00\$00.00~\copyright~2021 IEEE}

\maketitle

\begin{abstract}
Incomplete multi-view unsupervised feature selection (IMUFS), which aims to identify representative features from unlabeled multi-view data containing missing values, has received growing attention in recent years. Despite their promising performance, existing methods face three key challenges: 1) by focusing solely on the view-missing problem, they are not well-suited to the more prevalent mixed-missing scenario in practice, where some samples lack entire views or only partial features within views; 2) insufficient utilization of consistency and diversity across views limits the effectiveness of feature selection; and 3) the lack of theoretical analysis makes it unclear how feature selection and data imputation interact during the joint learning process. Being aware of these, we propose CLIM-FS, a novel IMUFS method designed to address the mixed-missing problem. Specifically, we integrate the imputation of both missing views and variables into a feature selection model based on nonnegative orthogonal matrix factorization, enabling the joint learning of feature selection and adaptive data imputation. Furthermore, we fully leverage consensus cluster structure and cross-view local geometrical structure to enhance the synergistic learning process. We also provide a theoretical analysis to clarify the underlying collaborative mechanism of CLIM-FS. Experimental results on eight real-world multi-view datasets demonstrate that CLIM-FS outperforms state-of-the-art methods.
\end{abstract}

\begin{IEEEkeywords}
Multi-view unsupervised feature selection, cross-view joint learning, consistency and diversity information. 
\end{IEEEkeywords}

\section{Introduction}
\IEEEPARstart{M}{ulti-view} unsupervised feature selection (MUFS) is a crucial dimensionality reduction technique designed to identify a representative subset of features from unlabeled multi-view data, thereby alleviating the ``curse of dimensionality'' and improving the performance of downstream tasks~\cite{R.ZhangIF2019,HouCTPAMI2023,G.J.LiTKDE2024}. Existing MUFS methods rely heavily on the assumption that all views of data are fully observed. However, this assumption is often violated in practice, as multi-view datasets frequently contain missing values due to various factors, such as equipment malfunctions or human errors~\cite{J.J.TangASC2024, P.X.ZengTPAMI2023}. The presence of missing values not only makes traditional MUFS methods that require complete data unsuitable, but also distorts the intrinsic data structure and weakens both intra-view and inter-view correlations, ultimately diminishing the effectiveness of these approaches. Consequently, selecting informative features from unlabeled multi-view data with missing values remains a significant challenge in real-world applications.

In recent years, various incomplete multi-view unsupervised feature selection (IMUFS) methods have been proposed to address the above problem. These methods can be classified into two categories. The first category, referred to as the ``two-stage'' approach, first imputes the missing data using an imputation algorithm and then performs feature selection on the imputed dataset using conventional MUFS methods, such as Cross-view Locality Preserved Diversity and Consensus Learning (CvLP\_DCL)~\cite{CvLP_DCL}, Unsupervised Kernel-based Multi-view Feature selection (UKMFS)~\cite{R.Y.HuAAAI2025}, and Partition-level Tensor Learning-based Multiview Unsupervised Feature Selection (PTFS)~\cite{PTFS}. However, this kind of method treats feature selection and data imputation as separate processes, overlooking the potential synergy between them. Local structure information obtained from the feature selection process can guide the imputation of missing data, while improved imputations can, in turn, enhance the effectiveness of feature selection. 

Rather than treating feature selection and data imputation as separate processes, the second category of methods, known as ``one-stage,'' integrates both into a unified learning framework. One such method is Unified View Imputation and Feature Selection Learning (UNIFIER)~\cite{UNIFIER}, in which sample-level and feature-level local structures are leveraged to guide the imputation process, and the imputed data further improves local structure learning for feature selection. Additionally, Huang et al. propose Tensorial Incomplete Multi-view Unsupervised Feature Selection (TIME-FS)~\cite{TIME-FS}, which learns a cross-view consistency anchor graph and a view-preference weight matrix to jointly guide both data imputation and feature selection. Yang et al. combine self-representation learning with a tensor-based low-rank constraint to adaptively impute missing views during the feature selection process~\cite{TERUIMUFS}. By enabling collaborative learning between data imputation and feature selection, this ``one-stage'' approach has demonstrated promising results in feature selection.

Despite the significant progress made by existing IMUFS methods, they primarily address a specific type of incomplete multi-view scenario, known as the ``view-missing'' problem, where entire views are absent for certain samples. However, in real-world applications, multi-view data often encounters the more general “mixed-missing” issue, where some samples completely lack certain views or have only partial features missing within specific views. For instance, in object recognition tasks, data captured from different camera angles constitute distinct views, and certain camera perspectives may be entirely unavailable for some objects due to network failures. Additionally, some camera views may have partially missing variables for certain objects as a result of temporary occlusion or motion blur. Fig.~\ref{Fig1} illustrates three types of incomplete multi-view data scenarios: view-missing, variable-missing (where samples have only partially missing variables within views), and mixed-missing. The mixed-missing issue represents a general pattern of missing data, with view-missing and variable-missing serving as two special cases. Although existing approaches effectively address the view-missing issue by jointly performing feature selection and view imputation, they still require pre-imputation of missing variables in mixed-missing or variable-missing scenarios, which leads to suboptimal feature selection performance. Furthermore, current IMUFS methods do not fully leverage cross-view consistency and diversity to guide the joint learning of feature selection and data imputation, limiting their overall effectiveness. In addition, these methods lack a theoretical analysis of the collaborative learning mechanism between feature selection and data imputation, making it difficult to interpret their models clearly. Specifically, it remains unclear how and why the interaction between feature selection and data imputation leads to improved feature selection performance.

\begin{figure}[t]
\centering
\includegraphics[width=\columnwidth]{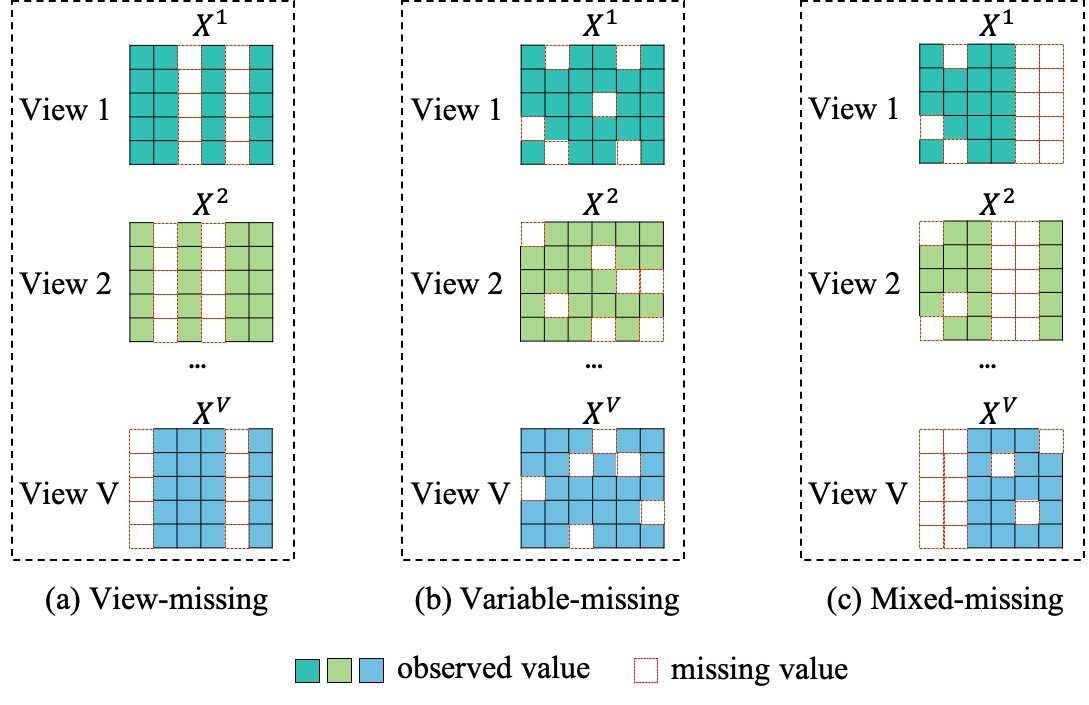} 
\caption{Illustration of three types of incomplete multi-view data scenarios: (a) view-missing, (b) variable-missing, and (c) mixed-missing. $\bm{X}^{v}$ denotes the data matrix of the $v$-th view, where each column represents a sample and each row corresponds to a feature.}
\label{Fig1}
\end{figure}

To address the above issues,  we propose a novel IMUFS method in this paper, called Cross-view joint Learning for mIxed-missing Multi-view unsupervised Feature Selection (CLIM-FS). Specifically, to address the mixed-missing problem, we first incorporate both missing views and missing variables as optimization variables into the nonnegative orthogonal matrix factorization-based feature selection model, thereby enabling joint feature selection and data imputation. Furthermore, the proposed CLIM-FS simultaneously exploits both the consensus cluster structure and the cross-view local geometric structure derived from the feature selection process to guide data imputation. Additionally, CLIM-FS ensures that neighboring samples remain close after imputation while maintaining separation from others, which in turn enhances feature selection performance. The framework of CLIM-FS is shown in Fig.~\ref{framework}. We also present a theoretical analysis to provide deeper insight into the collaborative learning mechanism underlying CLIM-FS.  In addition, we develop an alternative iterative optimization algorithm that is theoretically guaranteed to converge for solving the proposed model. Finally, we conduct extensive experiments on eight real-world multi-view datasets to demonstrate the effectiveness of our method compared with several state-of-the-art approaches. The main contributions of this paper are summarized as follows:
\begin{enumerate}
\item To the best of our knowledge, this is the first work to address multi-view unsupervised feature selection in the general mixed-missing scenario,  which advances MUFS toward a more realistic setting.

\item We integrate feature selection and data imputation into a joint learning framework, simultaneously exploiting cross-view consensus and diversity information to enhance their synergy, thereby improving the performance of feature selection.

\item We present a comprehensive theoretical analysis of the collaborative learning mechanism underlying CLIM-FS, along with the introduction of an effective optimization algorithm for our model and a demonstration of its theoretical convergence.

\item We conduct extensive experiments on eight real-world multi-view datasets to evaluate the effectiveness of the proposed CLIM-FS method, and the results show that CLIM-FS demonstrates superior performance compared to state-of-the-art methods.
\end{enumerate}

The remainder of this paper is organized as follows. Section 2 reviews related work. Section 3 details the proposed CLIM-FS method, while Section 4 introduces the corresponding optimization algorithm. Section 5 provides a theoretical analysis of the collaborative learning mechanism in CLIM-FS, as well as the algorithm's convergence and computational complexity. In Section 6, extensive experiments are conducted to verify the effectiveness of the proposed method. Finally, Section 7 concludes the paper.

\section{Related Work}\label{sec:Related work}
In this section, we present a brief overview of recent studies on multi-view unsupervised feature selection and incomplete multi-view unsupervised feature selection.
\subsection{Multi-view Unsupervised Feature Selection} In the past decade, numerous MUFS methods has been developed. Tang et al. proposed a cross-view similarity graph learning model with adaptive view weights to preserve the local manifold structure of data, thereby facilitating the selection of discriminative features~\cite{CvLP_DCL}. Cao et al. employed the random walk strategy to explore the multi-order neighbor information to guide feature selection~\cite{CFSMO}. Yuan et al. adaptively learned similarity-induced graphs under tensor low-rank constraints, which can exploit the high-order consensus information across different views to identify important features~\cite{TLR_MUFS}. Fang et al. integrated multi-view unsupervised feature selection and clustering into a joint learning framework, allowing for simultaneous feature selection and clustering~\cite{JMVFG}. Cao et al. integrated partition-level tensor learning with an adaptive self-paced strategy to capture high-order view correlations and discriminative partition information for multi-view unsupervised feature selection~\cite{PTFS}. Zhang et al. constructed bipartite graphs to capture the similarity structure between samples and anchors, which helps identify important features and reduces computational costs~\cite{C.L.ZhangIJCAI2024}. Wang et al. combined multi-view spectral clustering and weighted low-rank tensor learning to generate pseudo labels for guiding feature selection~\cite{WLTL}. Xu et al. leveraged graph regularization to capture shared manifold structures and kernel-based diversity representation to characterize view-specific dependencies, thereby enhancing discriminative feature selection~\cite{GCDUFS}. Hu et al. leveraged robust self-representation to learn consistent graph structures across views, and employed binary hashing to generate weakly-supervised labels that guide the feature selection process~\cite{R.Y.HuAAAI2025}. Although the aforementioned methods have shown promising performance of feature selection, all of them implicitly assume that each view of data is completely observed. Hence, these methods cannot be directly applied to the incomplete multi-view scenario.

\subsection{Incomplete Multi-view Unsupervised Feature Selection}
To address the incompleteness issue of multi-view data, several incomplete multi-view feature selection methods have been proposed in recent years. Xu et al. first impute the missing views using mean values, and then employ weighted non-negative matrix factorization to select important features while assigning lower weights to the imputed samples, thereby mitigating their influence on feature selection~\cite{CVFS}. Huang et al. leverage the complementary information across different views to learn missing similarities among data points, thereby obtaining a complete similarity-induced graph to preserve the local manifold structure of data.~\cite{C2IMUFS}. Yang et al. mitigate the impact of missing data by adaptively assigning sample weights during graph learning and enforce a low-redundancy constraint in a low-dimensional space to identify non-redundant and representative features~\cite{UIMUFSLR}. The above methods treat feature selection and data imputation as two independent processes, neglecting the synergistic effect between them. To tackle this problem, Huang et al. proposed a unified learning framework for feature selection and missing-view imputation, named UNIFIER briefly~\cite{UNIFIER}. It can utilize the local structures of both sample space and feature space to guide data imputation while identifying discriminative features. Moreover, TIME-FS simultaneously performed missing-view imputation and unsupervised feature selection on incomplete multi-view data by leveraging a tensor-based learning framework~\cite{TIME-FS}. Yang et al. leveraged tensor-based low-rank representations to adaptively recover missing samples during the feature selection process and incorporated self-representation learning to enhance the robustness of the feature selection model against noise~\cite{TERUIMUFS}. However, as previously mentioned, existing IMUFS methods face three key challenges: (\romannumeral1) They exclusively focus on the view-missing issue, making them unsuitable for the general mixed-missing scenario, which is more commonly encountered in real-world applications. (\romannumeral2) Existing IMUFS methods fail to comprehensively exploit cross-view consistency and diversity information to guide the joint learning of feature selection and data imputation, thereby limiting their performance in feature selection. (\romannumeral3) They lack a theoretical analysis that elucidates the collaborative learning mechanism between feature selection and data imputation, which hinders a clear understanding of their models.

\begin{figure}[t]
\centering
\includegraphics[width=\columnwidth]{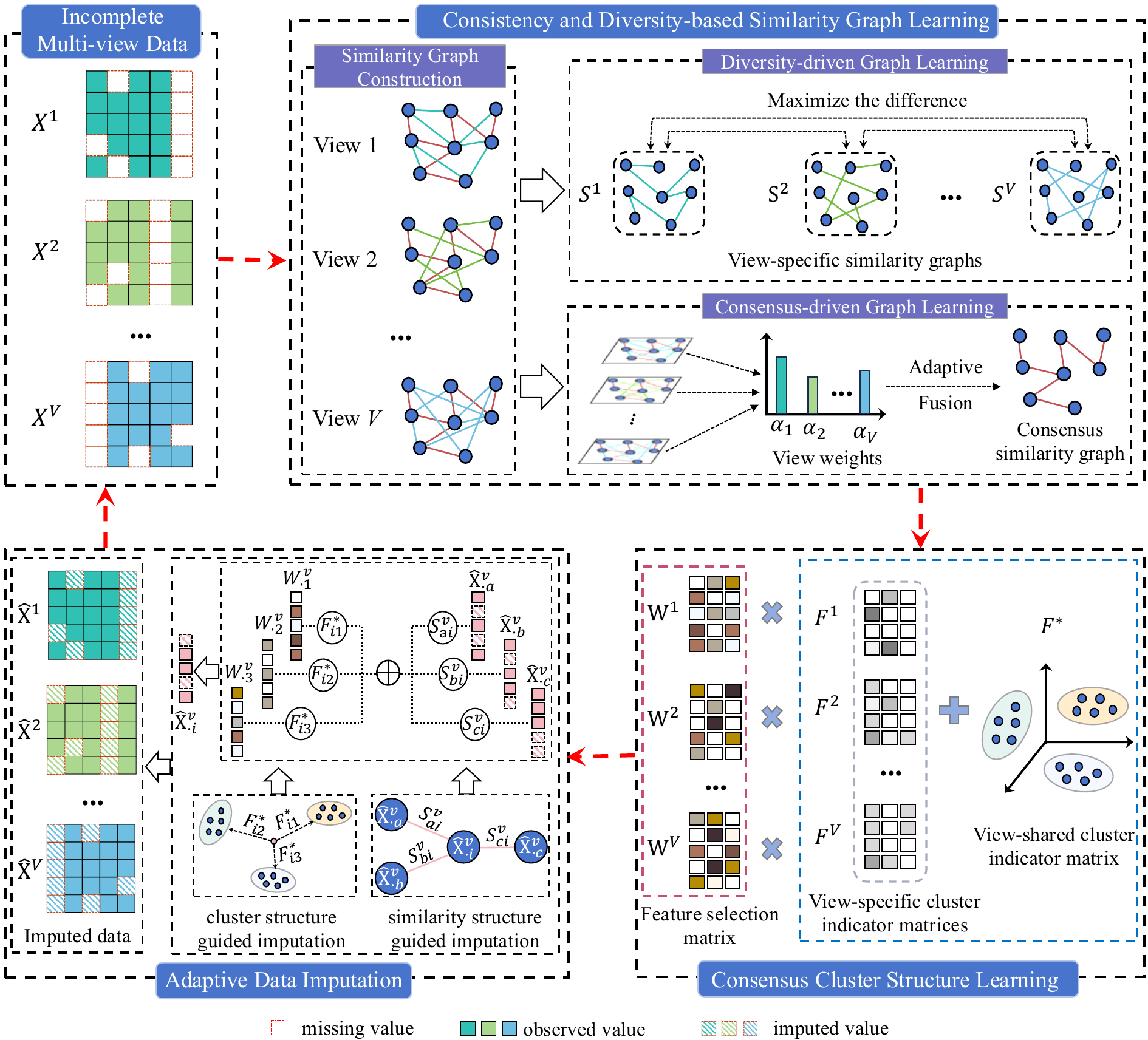} 
\caption{The framework of the proposed Cross-view joint Learning for mIxed-missing Multi-view unsupervised Feature Selection(CLIM-FS) method.}
\label{framework}
\end{figure}
\section{Proposed Method}\label{sec:proposed method}
\subsection{Notations} 
Throughout this paper, the bold uppercase letters (e.g. $\bm{A}$) denote matrices, while the italic lowercase letters (e.g. $a$) denote scalars. For any matrix $\bm{A}\in \mathbb{R}^{p \times q}$, we denote the $(i,j)$-th entry, the $i$-th row, and the $j$-th column of $\bm{A}$  by $\mathnormal{A}_{ij}$, $\bm{A}_{i \cdot}$, $\bm{A}_{\cdot j}$, respectively.  The Frobenius norm of $\bm{A}$ is defined as $\|\bm{A}\|_{\mathrm{F}}=\sqrt{\sum_{i=1}^{p}\sum_{j=1}^{q}\mathnormal{A}_{ij}^{2}}$. The $\ell_{2,1}$-norm and $\ell_{1}$-norm of $\bm{A}$ are given by $\|\bm{A}\|_{2,1}=\sum_{i=1}^{p}\sqrt{\sum_{j=1}^{q}\mathnormal{A}_{ij}^{2}}$ and $\|\bm{A}\|_{1}=\sum_{i=1}^{p}{\sum_{j=1}^{q}|\mathnormal{A}_{ij}|}$, respectively. $\operatorname{Tr}(\bm{A})$ and $\bm{A}^{\top}$  represent the trace and transpose of $\bm{A}$, respectively. The identity matrix is denoted by $\bm{I}$, and $\textbf{\textit{1}}=[1,\dots,1]^{\top}$ denotes a column vector of ones. 

Let $\mathcal{X}=\{\bm{X}^{v} \in \mathbb{R}^{d_v \times n}\}_{v=1}^{V}$ be an incomplete multi-view dataset with $V$ views, where $\bm{X}^{v}$  is the data matrix of the $v$-th view with $n$ samples and $d_v$ features. To describe the mixed-missing scenario involving both missing views and missing variables, we introduce a binary indicator matrix  $\bm{E}^{v} \in \{0,1\}^{d_v \times n}$ in $\bm{X}^{v}$, where $\mathnormal{E}^{v}_{ij}=0$ if $\mathnormal{X}^{v}_{ij}$ is missing, and $\mathnormal{E}^{v}_{ij}=1$ otherwise. For each multi-view sample $i$, if all variables are missing in a certain view (i.e., there exists $ v \in \{1,\dots, V\}$ such that $\bm{E}^{v}_{\cdot i} = \bm{0}$), it is referred to as a view-missing case. If only part of the variables within a view are missing (i.e., there exist $v \in \{1,\dots, V\}$ and $j \in \{1,\dots,d_{v}\}$ such that $\bm{E}^{v}_{ji} = 0$ while $\bm{E}^{v}_{\cdot i} \neq \bm{0}$), it is referred to as a variable-missing case. Such a mixed-missing scenario is frequently encountered in practical multi-view datasets. Our goal is to identify the most informative features from incomplete multi-view data under the mixed-missing scenario.

\subsection{Formulation of CLIM-FS}
Traditional approaches for selecting representative feature subsets from multi-view data in the mixed missing scenario generally begin by independently imputing the missing values in each view, followed by performing feature selection on the imputed data. This process can be summarized as follows:

\begin{enumerate}[label=(\roman*)]
\item \textit{Imputation stage}:
\begin{equation}
\begin{aligned}
\hat{\bm{X}}^{v} = \mathcal{Q}(\bm{X}^{v}), v=1,\dots,V
\end{aligned}
\end{equation}

\item \textit{Feature selection stage}:
\begin{equation}
\begin{aligned}
&\min_{\bm{W}^{v},\bm{F}^{v}}\sum_{v=1}^{V} \mathcal{L}(\hat{\bm{X}}^{v}, \bm{W}^{v}, \bm{F}^{v}) + \lambda \mathcal{R}(\bm{W}^{v}), \ \\
\end{aligned}
\end{equation} 
\end{enumerate}
where $\mathcal{Q}(\cdot)$ denotes the imputation operator (such as mean imputation), $\hat{\bm{X}}^{v} \in \mathbb{R}^{d_v \times n}$ represents the imputed data matrix of view $v$, $\mathcal{L}(\cdot)$ denotes the feature selection loss function, and $\bm{W}^{v} \in \mathbb{R}^{d_v \times c}$ and $\bm{F}^{v} \in \mathbb{R}^{n \times c}$ (with $c$ denoting the number of clusters)  represent the feature selection matrix and cluster indicator matrix for view $v$, respectively.$\mathcal{R}(\bm{W}^{v})$ denotes a sparse regularizer to eliminate less important features, and  $\lambda$ is the regularization parameter.
 
However, as previously discussed, this independent treatment of feature selection and data imputation overlooks the potential synergy between the two processes and fails to fully exploit inter-view associations, thereby limiting the feature selection performance. To address these issues, we propose integrating feature selection and data imputation within a joint learning framework that leverages cross-view diversity and consensus information to enhance collaborative learning. Specifically, we employ the non-negative orthogonal matrix factorization model~\cite{S.H.WangAAAI2015} to formulate the feature selection process, and impose $\ell_{2,1}$-norm regularization on the feature selection matrix $\bm{W}^{v}$ to promote the identification of discriminative features. Simultaneously, we introduce a general imputation constraint that explicitly incorporates both missing views and missing variables as optimization variables within the nonnegative orthogonal matrix factorization model. In this way, the missing values are jointly optimized in an alternating manner with other variables until convergence. The corresponding objective function is presented below.
\begin{equation}\label{3.2}
\begin{aligned}
&\min_{\hat{\bm{X}}^{v},\bm{W}^{v},\bm{F}^{v}} \sum_{v=1}^{V} \|\hat{\bm{X}}^{v} - \bm{W}^{v}\bm{F}^{v \top} \|_{\mathrm{F}}^{2} + \lambda \|\bm{W}^{v}\|_{2,1}\\
&\text { s.t. } \bm{E}^{v} \odot (\hat{\bm{X}}^{v} - \bm{X}^{v}) = \bm{0}, \bm{F}^{v} \geq \bm{0}, \bm{F}^{v \top}\bm{F}^{v} = \bm{I}.
\end{aligned}
\end{equation}
In Eq. (\ref{3.2}), feature selection and data imputation are seamlessly integrated into a unified learning framework instead of being handled independently. Moreover, whether a sample has all or only some variables missing in a view, its missing values can be adaptively imputed during the feature selection process, while the observed values remain unchanged due to the constraint  $\bm{E}^{v} \odot (\hat{\bm{X}}^{v} - \bm{X}^{v}) = \bm{0}$. Therefore, unlike conventional IMUFS methods, which are limited to addressing the view-missing problem, the proposed method can handle the more general mixed-missing scenario involving both missing views and missing variables.

Furthermore, to enhance the collaborative learning between feature selection and data imputation by exploiting both cross-view diversity and consensus information,  we first decompose the clustering indicator matrix of each view into a shared component and a view-specific component. The shared component captures the consensus clustering structure across multiple views, while the view-specific component reflects label noise unique to each view. We further impose spectral graph regularization on the shared component to ensure that similar data points receive similar clustering labels. Meanwhile, we regularize the view-specific component using the $\ell_1$-norm to enforce sparsity and eliminate label inconsistencies caused by view noise. As a result, we reformulate Eq. (\ref{3.2}) as follows:

\begin{equation}\label{3.3}
\begin{aligned}
\min_{\substack{\hat{\bm{X}}^{v}\!,\bm{W}^{v}\\ \bm{F}^{v}\!, \bm{F}^{*}}}& \sum_{v=1}^{V} \|\hat{\bm{X}}^{v} - \bm{W}^{v}(\bm{F}^{*} +\bm{F}^{v} )^{\top} \|_{\mathrm{F}}^{2} + \lambda \|\bm{W}^{v}\|_{2,1}\\
&+ \beta \| \bm{F}^{v} \|_{1} + \operatorname{Tr}( \bm{F}^{* \top}\bm{L}_{H}\bm{F}^{*} )\\
\text { s.t. } \bm{E}&^{v} \odot (\hat{\bm{X}}^{v} - \bm{X}^{v}) = \bm{0}, \bm{F}^{*} \geq \bm{0}, \bm{F}^{*\top}\bm{F}^{*} = \bm{I},
\end{aligned}
\end{equation}
where $\bm{F}^{*}$ and $\bm{F}^{v}$ denote the view-shared and view-specific cluster indicator matrices, respectively. Additionally, $\bm{L}_{H} = \bm{D}_{H}-\bm{H} \in \mathbb{R}^{n \times n}$ represents the Laplacian matrix, where $\bm{H}$ denotes the consensus similarity matrix across different views, and the degree matrix $\bm{D}_{H}$ is a diagonal, with its $i$-th diagonal entry given by $\sum_{j=1}^{n}H_{ji}$. The details of how $\bm{H}$ is learned will be provided later. In Eq.~(\ref{3.3}), missing data are imputed using the consensus cluster structure information across different views. Moreover, Eq.~(\ref{3.3}) ensures that samples within the same cluster remain close after imputation, while those from different clusters stay well separated. Thus, the proposed method preserves both intra-cluster and inter-cluster relationships in the imputed data, thereby providing reliable discriminative information to enhance feature selection performance. These properties are theoretically established in Theorem 1 of Section 5.1.

Previous studies have shown that constructing nearest neighbor graphs to preserve local geometric structure is crucial for improving the performance of unsupervised feature selection~\cite{X.W.LiuTNNLS2013,B.L.ChenTPAMI2022}. To more effectively learn similarity matrices that maintain the local geometric structure in multi-view data, we propose a novel cross-view similarity graph learning method that simultaneously captures both the consistency and diversity of local geometric structures across multiple views. Specifically, we adaptively learn a similarity-induced graph for each view and capture cross-view diversity by maximizing the discrepancy between similarity matrices from different views. Additionally, our approach explores the intrinsic consistency of local geometric structures by fusing the similarity matrices across views using adaptive view weights. Formally, the learning objective of our method is expressed as follows:
\begin{equation}\label{3.4}
\begin{aligned}
\min_{\bm{S}^{v}\!,\bm{H}\!,\bm{\alpha}} &\sum_{v=1}^{V} \big[ \frac{1}{2}\!\sum_{i,j=1}^{n} \|\hat{\bm{X}}_{\cdot i}^{v}-\hat{\bm{X}}_{\cdot j}^{v}\|_{2}^{2}{S}^{v}_{ij} +\!\! \sum_{m =1}^{V}\!\alpha_{v}\alpha_{m}\!\operatorname{Tr}(\bm{S}^{v}\bm{S}^{m\top} \!) \\
&+ \xi_{v}\|\bm{S}^{v}\|_{\mathrm{F}}^{2}\big] -\big[ \operatorname{Tr}(\bm{H}^{\top}\sum_{v=1}^{V}\alpha_{v}\bm{S}^{v}) - \gamma\|\bm{H}\|_{\mathrm{F}}^{2}\big] \\
\text { s.t. }\mathnormal{S}&^{v}_{ij}\geq 0, \textbf{\textit{1}}^{\top}\bm{S}_{\cdot i}^{v}=1,\|\bm{S}_{\cdot i}^{v}\|_{0}={k},\mathnormal{H}_{ij}\geq 0,\textbf{\textit{1}}^{\top}\bm{H}_{\cdot i}=1,\\\|&\bm{H}_{\cdot i}\|_{0}={k}, \bm{\alpha}^{\top}\bm{1} = 1,\alpha_{v} \geq 0,
\end{aligned}
\end{equation}
where $\bm{\alpha}=[\alpha_{1},\dots,\alpha_{v}]^{\top}$ denotes the view weight vector, $\gamma$ is the regularization parameter, and $\bm{H}\in \mathbb{R}^{n \times n}$ and $\bm{S}^{v}\in \mathbb{R}^{n \times n}$ represent the consensus and view-specific similarity matrix, respectively. In addition, the constraints $\|\bm{S}_{\cdot i}^{v}\|_{0}={k}$ and $\|\bm{H}_{\cdot i}\|_{0}={k}$ ensure that each data point is connected only to its $k$ nearest neighbors in the similarity-induced graphs. As shown in Eq.~(\ref{3.4}), the proposed method leverages both intra-view and inter-view neighborhood relationships to guide data imputation. Furthermore, it ensures that similar sample pairs within and across views remain similar after imputation, thereby facilitating the preservation of local geometric structure. This, in turn, effectively enhances feature selection performance under the incomplete multi-view scenario. Theoretical guarantees for these properties are provided in Theorem 2 of Section 5.1.

By combining Eq.~(\ref{3.3}) and Eq.~(\ref{3.4}), we obtain the final objective function of the proposed CLIM-FS as follows:
\begin{equation}\label{3.5}
\begin{aligned}
\min_{\bm{\mathit{{\Theta}}}}& \!\sum_{v=1}^{V} \!\big[ \|\hat{\bm{X}}^{v} \!-\! \bm{W}^{v}(\bm{F}^{v} + \bm{F}^{*})^{\top} \|_{\mathrm{F}}^{2} + \lambda \|\bm{W}^{v}\|_{2,1} + \beta\| \bm{F}^{v} \|_{1}\\
+\frac{1}{2}&\!\!\sum_{i,j=1}^{n} \!\!\|\hat{\bm{X}}_{\cdot i}^{v}\!-\!\hat{\bm{X}}_{\cdot j}^{v}\|_{2}^{2}{S}^{v}_{ij} \! + \!\!\sum_{m =1}^{V}\!\!\alpha_{v}\alpha_{m}\!\operatorname{Tr}(\bm{S}^{v}\bm{S}^{m\top} ) \!+ \xi_{v}\|\bm{S}^{v}\|_{\mathrm{F}}^{2} \big]\\
-\big[ &\operatorname{Tr}(\bm{H}^{\top}\sum_{v=1}^{V}\alpha_{v}\bm{S}^{v}) - \gamma\|\bm{H}\|_{\mathrm{F}}^{2}\big] + \operatorname{Tr}( \bm{F}^{*\top}\bm{L}_{{H}}\bm{F}^{*} )\\
\text { s.t. }&\bm{E}^{v} \!\odot (\hat{\bm{X}}^{v} \!-\! \bm{X}^{v}) = \bm{0}, \bm{F}^{*} \!\geq\! \bm{0}, \bm{F}^{*\top}\!\bm{F}^{*} \!=\!\bm{I},\textbf{\textit{1}}^{\top}\bm{S}_{\cdot i}^{v}=1\\
&\mathnormal{S}^{v}_{ij}\geq 0,\|\bm{S}_{\cdot i}^{v}\|_{0}={k},\textbf{\textit{1}}^{\top}\bm{H}_{\cdot i}=1,\mathnormal{H}_{ij}\geq 0,\|\bm{H}_{\cdot i}\|_{0}\!=\!{k}\\
&\bm{\alpha}^{\top}\bm{1} = 1, \alpha_{v} \geq 0,
\end{aligned}
\end{equation}
where $\bm{\mathit{{\Theta}}}\!=\!\{\{\hat{\bm{X}}^{v}\}_{v=1}^{V}, \{\bm{W}^{v}\}_{v=1}^{V}, \{\bm{F}^{v}\}_{v=1}^{V},\{\bm{S}^{v}\}_{v=1}^{V},\bm{F}^{*},$ $\bm{H},\bm{\alpha}\}$. In Eq.~(\ref{3.5}), missing data are effectively imputed with guidance from the consensus cluster structure and cross-view local geometric information. This refined imputation subsequently enhances the structural learning of multi-view data, thereby improving feature selection performance. Moreover, although Eq. (3.5) includes four parameters, namely $\lambda$, $\beta$, $\xi_v$, and $\gamma$, the regularization parameters $\xi_v$ and $\gamma$ are automatically determined during optimization  of $\bm{S}^{v}$ and $\bm{H}$, respectively, as detailed in the next section. Therefore, only $\lambda$ and $\beta$ require manual tuning in the proposed CLIM-FS.

In summary, the proposed CLIM-FS in Eq.~(\ref{3.5}) offers two key advantages: (\romannumeral1) Compared with existing IMUFS methods, which are limited to handling only the view-missing issue, CLIM-FS can address the more general mixed-missing problem, as well as its two special cases, namely the view-missing and variable-missing issues, thereby offering broader applicability in real-world scenarios. (\romannumeral2) Instead of relying exclusively on intra-view information for joint feature selection and data imputation, CLIM-FS fully leverages both the diversity and consensus across different views to enhance the collaborative interaction between feature selection and data imputation, thereby improving feature selection performance. 

\section{Optimization}\label{sec:optimization}
As shown in Eq.~(\ref{3.5}), the objective function is not convex with respect to all of its variables. Therefore, we develop an alternating iterative algorithm to solve this optimization problem, in which each variable is optimized in turn while the others are held fixed.

\subsection{Update $\bm{W}^{v}$ by Fixing Other Variables}
With the other variables fixed, $\bm{W}^{v}$ can be updated by solving the following optimization problem:
\begin{equation}\label{4.5}
\begin{aligned}
\min_{\bm{W}^{v}} & \|\hat{\bm{X}}^{v} - \bm{W}^{v}(\bm{F}^{v} + \bm{F}^{*})^{\top} \|_{\mathrm{F}}^{2} + \lambda \|\bm{W}^{v}\|_{2,1}.\\
\end{aligned}
\end{equation}

According to~\cite{X.L.LiTNNLS2018}, problem (\ref{4.5}) can be equivalently transformed into the following form:
\begin{equation}\label{4.6}
\begin{aligned}
\min_{\bm{W}^{v}} & \|\hat{\bm{X}}^{v} - \bm{W}^{v}(\bm{F}^{v} + \bm{F}^{*})^{\top} \|_{\mathrm{F}}^{2} + \lambda\operatorname{Tr}(\bm{W}^{v\top}\bm{D}^{v}\bm{W}^{v}),\\
\end{aligned}
\end{equation}
where $\mathnormal{D}^{v}_{ii} = 1 / 2\sqrt{\|\bm{W}_{i \cdot}^{v}\|_{2}^{2}+\epsilon}$, and $\epsilon$ is a small constant to prevent the denominator from vanishing.

By taking the derivative of Eq. (\ref{4.6}) w.r.t. to $\bm{W}^{v}$ and setting it to zero, we obtain
\begin{equation}\label{4.7}
\lambda\bm{D}^{v}\bm{W}^{v} + \bm{W}^{v}(\bm{F}^{v}+\bm{F}^{*})^{\top}(\bm{F}^{v}+\bm{F}^{*})=\hat{\bm{X}}^{v}(\bm{F}^{v}+\bm{F}^{*}). 
\end{equation}
Problem (\ref{4.7}) is a standard form of the Sylvester matrix equation, which can be effectively solved using the Bartels-Stewart algorithm~\cite{R.A. Horn2012}.

\subsection{Update $\bm{F}^{v}$ by Fixing Other Variables}
By fixing other variables, the optimization problem w.r.t $\bm{F}^{v}$ can be transformed into
\begin{equation}\label{4.8}
\begin{aligned}
\min_{\bm{F}^{v}} \sum_{v=1}^{V} \|\hat{\bm{X}}^{v} - \bm{W}^{v}(\bm{F}^{v} + \bm{F}^{*})^{\top} \|_{\mathrm{F}}^{2} + \beta\| \bm{F}^{v} \|_{1}.
\end{aligned}
\end{equation}

Problem (\ref{4.8}) can be efficiently solved using the proximal gradient descent method~\cite{A.NitandaNIPS2014}, as detailed below:

\begin{equation}\label{4.9}
\begin{aligned}
\mathnormal{F}^{v}_{ij} = \left[\Omega( \bm{F}^{v} - t\nabla(\bm{F}^{v}))\right]_{ij},
\end{aligned}
\end{equation}
where $\nabla(\bm{F}^{v})\!=\!-2\hat{\bm{X}}^{v\top}\bm{W}^{v}\!+\!2\bm{F}^{v}\bm{W}^{v\top}\bm{W}^{v}\!+\!2\bm{F}^{*}\bm{W}^{v\top}\bm{W}^{v}$, and $t$ is the step size. $\Omega(\cdot)$ is a soft thresholding operator, which is defined as follows:
\begin{equation}\label{4.10}
\begin{aligned}
\left[\Omega(\bm{A})\right]_{ij} = {\mathnormal{sign}}(\mathnormal{A}_{ij})\mathnormal{max}(|\mathnormal{A}_{ij}|-\beta,0),
\end{aligned}
\end{equation}
where $\mathnormal{sign}(\cdot)$ is the sign function. In addition, to accelerate the convergence of Eq. (\ref{4.9}),  we employ the Adam algorithm~\cite{ADAM} to adaptively adjust the step size $t$.

\subsection{Update $\bm{F}^{*}$ by Fixing Other Variables}
When other variables are fixed, the optimization problem w.r.t. $\bm{F}^{*}$ can be reduced to:
\begin{equation}\label{4.11}
\begin{aligned}
\min_{\bm{F}^{*}} &\!\sum_{v=1}^{V} \|\hat{\bm{X}}^{v} - \bm{W}^{v}(\bm{F}^{v} + \bm{F}^{*})^{\top} \|_{\mathrm{F}}^{2} + \operatorname{Tr}( \bm{F}^{*\top}\bm{L}_{{H}}\bm{F}^{*} )\\
\text { s.t. } &\bm{F}^{*} \geq \bm{0}, \bm{F}^{*\top}\bm{F}^{*} = \bm{I}.
\end{aligned}
\end{equation}

The Lagrangian function for problem (\ref{4.11}) is given as follows:
\begin{equation}\label{4.12}
\begin{aligned}
&\mathcal{L}(\bm{F}^{*},\bm{\phi})=\sum_{v=1}^{V}\operatorname{Tr}(-2\hat{\bm{X}}^{v\top}\bm{W}^{v}\bm{F}^{*\top}+\bm{F}^{v}\bm{W}^{v\top}\bm{W}^{v}\bm{F}^{*\top}+\\
&\bm{F}^{*}\bm{W}^{v\top}\bm{W}^{v}\bm{F}^{v\top}+\bm{F}^{*}\bm{W}^{v\top}\bm{W}^{v}\bm{F}^{*\top}) + \rho\|\bm{F}^{*\top}\bm{F}^{*}-\bm{I}\|_{\mathrm{F}}^{2}\\
&-\operatorname{Tr}(\bm{\phi}^{\top}\bm{F}^{*})+\operatorname{Tr}(\bm{F}^{*\top}\bm{L}_{{H}}\bm{F}^{*}),
\end{aligned}
\end{equation}
where $\bm{\phi}$ is the Lagrange multiplier, and $\rho$ is a large constant to guarantee the orthogonality of $\bm{F}^{*}$.

Taking the derivative of $\mathcal{L}(\bm{F}^{*},\bm{\phi})$ w.r.t. $\bm{F}^{*}$ and utilizing the Karush-Kuhn-Tucker (KKT) complementary condition $\bm{\phi}_{ij}\bm{F}^{*}_{ij}=0$~\cite{KKT}, we derive the update rule for $\bm{F}^{*}$ as follows:
\begin{equation}\label{4.14}
\begin{aligned}
\bm{F}^{*}\!\!\!=\!\bm{F}^{*} \!\!\odot\! \frac{\sum_{v=1}^{V}(\bm{J}_{[+]}^{v}\!+\!\bm{M}_{[-]}^{v}\!+\!\bm{F}^{*}\bm{U}_{[-]}^{v})+\bm{H}\bm{F}^{*}+2\rho\bm{F}^{*}}{\sum_{v=1}^{V}(\bm{J}_{[-]}^{v}\!+\!\bm{M}_{[+]}^{v}\!+\!\bm{F}^{*}\bm{U}_{[+]}^{v})+\bm{D}_{{H}}\bm{F}^{*}+2\rho\bm{K}},
\end{aligned}
\end{equation}
where  $\odot$ denotes the element-wise product, $\bm{J}^{v}=\hat{\bm{X}}^{v\top}\bm{W}^{v}$, $\bm{U}^{v}=\bm{W}^{v\top}\bm{W}^{v}$, $\bm{M}^{v}=\bm{F}^{v}\bm{U}^{v}$, and $\bm{K}=\bm{F}^{*}\bm{F}^{*\top}\bm{F}^{*}$. Furthermore, for any matrix $\bm{A}$, $\bm{A}_{[-]}$ and $\bm{A}_{[+]}$ are defined as $\bm{A}_{[-]}=1/2(|\bm{A}|-\bm{A})$ and $\bm{A}_{[+]}=1/2(|\bm{A}|+\bm{A})$, respectively.

\subsection{Update $\bm{S}^{v}$ by Fixing Other Variables}
With other variables fixed, each column of $\bm{S}^{(v)}$ is independent of the others,  which allows  $\bm{S}^{(v)}$ to be optimized by solving each column $\bm{S}_{\cdot j}^{v}$ separately, as follows:
\begin{equation}\label{4.17}
\begin{aligned}
\min_{\bm{S}_{\cdot j}^{v}}&\sum_{i=1}^{n}(\frac{1}{2}\|\hat{\bm{X}}_{\cdot i}^{v}-\hat{\bm{X}}_{\cdot j}^{v}\|_{2}^{2}-\alpha_{v}\mathnormal{H}_{ij}+\sum_{m \neq v}^{V}\alpha_{v}\alpha_{m}\mathnormal{S}_{ij}^{m})\mathnormal{S}_{ij}^{v}+\\
&(\xi_{v}+\alpha_{v}^{2})\|\bm{S}_{\cdot j}^{v}\|_{2}^{2}\\
\text { s.t. } &\mathnormal{S}^{v}_{ij}\geq 0, \textbf{\textit{1}}^{\top}\bm{S}_{\cdot j}^{v}=1, \|\bm{S}_{\cdot j}^{v}\|_{0}={k}.
\end{aligned}
\end{equation}

By defining a column vector $\bm{q}_{j}^{v} \in \mathbb{R}^{n \times 1}$, whose $i$-th entry is given by $q_{ij}^{v}=\frac{1}{2}\|\hat{\bm{X}}_{\cdot i}^{v}-\hat{\bm{X}}_{\cdot j}^{v}\|_{2}^{2}-\alpha_{v}\mathnormal{H}_{ij}\!+\!\sum_{m \neq v}^{V}\!\alpha_{v}\alpha_{m}\mathnormal{S}_{ij}^{m}$, the problem (\ref{4.17}) can be reformulated as follows:
\begin{equation}\label{4.18}
\begin{aligned}
&\min _{\bm{S}_{\cdot j}^{v}} \frac{1}{2}\left\|\bm{S}_{\cdot j}^{v}+\bm{q}_{j}^{v}/2 (\xi_{v}+\alpha_{v}^{2})\right\|_{2}^{2} \\
\text { s.t. } &\mathnormal{S}^{v}_{ij}\geq 0, \textbf{\textit{1}}^{\top}\bm{S}_{\cdot j}^{v}=1, \|\bm{S}_{\cdot j}^{v}\|_{0}={k}.
\end{aligned}
\end{equation}

The Lagrangian function of the problem (\ref{4.18}) can be written as follows:
\begin{equation}\label{add1}
\begin{aligned}
\mathcal{L}(\bm{S}_{\cdot j}^{v},\psi,\bm{\varphi}) =&  \frac{1}{2}\left\|\bm{S}_{\cdot j}^{v}+\bm{q}_{j}^{v}/2 (\xi_{v}+\alpha_{v}^{2})\right\|_{2}^{2} - \psi(\bm{1}^{T} \bm{S}_{\cdot j}^{v}-1)\\
&-\bm{\varphi}^{T}\bm{S}_{\cdot j}^{v},
\end{aligned}
\end{equation}
where $\psi$ and $\bm{\varphi}^{T}$ are the Lagrangian multipliers.

By taking the derivative of $\mathcal{L}(\bm{S}_{\cdot j}^{v},\psi,\bm{\varphi})$ w.r.t $\bm{S}_{\cdot j}^{v}$ and setting  it to zero, we have
\begin{equation}\label{add2}
\begin{aligned}
\bm{S}_{\cdot j}^{v} + \bm{q}_{j}^{v}/2 (\xi_{v}+\alpha_{v}^{2}) - \psi \bm{1}^{T} - \bm{\varphi}^{T}=  \bm{0}.
\end{aligned}
\end{equation}
 
According to the KKT complementary condition, i.e., $\mathnormal{S}_{ij}^{v}\varphi_{i}=0$, we can obtain the solution for $\bm{S}_{\cdot j}^{v}$ as follows:
\begin{equation}
    \begin{aligned}
\hat{\mathnormal{S}}_{ij}^{v}=\mathnormal{max}(\psi - {q_{ij}^{v}}/{2(\xi_{v}+\alpha_{v}^{2})},0). 
\end{aligned}
\end{equation}

To ensure that the constraint $\|\bm{S}_{\cdot j}^{v}\|_{0}={k}$ is satisfied, we first arrange $q_{1j}^{v} \dots q_{nj}^{v} $ in ascending order, and then set $\hat{\mathnormal{S}}_{kj}^{v}>0$ and $\hat{\mathnormal{S}}_{k+1,j}^{v}=0$ so that $\bm{S}_{\cdot j}^{v}$ has only $k$ nonzero entries. Hence, we have 
\begin{equation}
    \begin{aligned}
\psi - {q_{kj}^{v}}/{2(\xi_{v}+\alpha_{v}^{2})}>0 , \psi - {q_{k+1,j}^{v}}/{2(\xi_{v}+\alpha_{v}^{2})} \leq 0. 
\end{aligned}
\end{equation}

Together with the constraint $\bm{1}^{T} \bm{S}_{\cdot j}^{v}=1$, we obtain $\psi = \frac{1}{k}+\frac{1}{2k(\xi_{v}+\alpha_{v}^{2})}\sum_{t=1}^{k}q_{tj}^{v}$. By setting $\xi_{v}=(k q_{k+1,j}^{v}-\sum_{t=1}^{k} q_{tj}^{v})/2 - \alpha_{v}^{2}$, we have the optimal solution of $\mathnormal{S}_{ij}^{v}$ as follows:
\begin{equation}\label{4.18.5}
\mathnormal{S}_{i j}^{v}=\left\{\begin{array}{cc}
\frac{q_{k+1,j}^{v}-q_{i j}^{v}}{k q_{k+1,j}^{v}-\sum_{t=1}^{k} q_{t j}^{v}} & j \leq k; \\
0 & j>k.
\end{array}\right.
\end{equation}

\begin{algorithm}[t]
\caption{Iterative Algorithm of CLIM-FS}
\KwIn{\begin{enumerate}
\item Incomplete multi-view data $\{\bm{X}^{(v)}\!\in\! \mathbb{R}^{d_v \times n}\}_{v=1}^{V}$;
\item the parameters $\beta$ and $\lambda$;
\end{enumerate}
}

\textbf{Initialize:} $\{\bm{W}^{v}\}_{v=1}^{V}$, $\{\bm{S}^{v}\}_{v=1}^{V}$, $\bm{H}$, $\{\bm{F}^{v}\}_{v=1}^{V}$, $\bm{F}^{*}$,  $\bm{\alpha}$.

\While{not convergent}{
\quad 1. Update $\{\bm{W}^{v}\}_{v=1}^{V}$ by solving Eq. (\ref{4.7});\\
\quad 2. Update $\{\mathnormal{D}^{v}_{ii} = 1 / 2\sqrt{\|\bm{W}_{i \cdot}^{v}\|_{2}^{2}+\epsilon}\}_{v=1}^{V}$;\\
\quad 3. Update $\{\bm{F}^{v}\}_{v=1}^{V}$ via Eq. (\ref{4.9});\\
\quad 4. Update $\bm{F}^{*}$ via Eq. (\ref{4.14});\\
\quad 5. Update $\{\bm{S}^{v}\}_{v=1}^{V}$ via Eq. (\ref{4.18.5});\\
\quad 6. Update $\bm{H}$ via Eq. (\ref{4.23});\\
\quad 7. Update $\bm{\alpha}$ by solving problem (\ref{4.25});\\
\quad 8. Update $\{\hat{\bm{X}}^{v}\}_{v=1}^{V}$ via Eq. (\ref{4.4}).\\
}

\KwOut{Sorting the $\ell_{2}$-norm of rows of $\{\bm{W}^{v} \}_{v=1}^{V}$ in descending order and selecting the top $r$ features from $\hat{\bm{X}}^{v}$.}
\end{algorithm}

\subsection{Update $\bm{H}$ by Fixing Other Variables}
When other variables are fixed, the objective function w.r.t $\bm{H}$ can be rewritten as follows:
\begin{equation}\label{4.19}
\begin{aligned}
&\min_{\bm{H}} -\operatorname{Tr}(\bm{H}^{\top}\sum_{v=1}^{V}\alpha_{v}\bm{S}^{v}) + \gamma\|\bm{H}\|_{\mathrm{F}}^{2} + \operatorname{Tr}( \bm{F}^{*\top}\bm{L}_{\mathrm{H}}\bm{F}^{*} )\\
&\text { s.t. }\mathnormal{H}_{ij}\geq 0, \textbf{\textit{1}}^{\top}\bm{H}_{\cdot i}=1,\|\bm{H}_{\cdot i}\|_{0}={k}.
\end{aligned}
\end{equation}

According to the matrix trace property, we can transform problem (\ref{4.19}) into the following element-wise form:
\begin{equation}\label{4.20}
\begin{aligned}
&\min_{\bm{H}}\!\sum_{i,j=1}^{n}\frac{1}{2}\|\bm{F}^{*}_{i \cdot}-\bm{F}^{*}_{j \cdot}\|_{2}^{2}H_{ij} + \gamma H_{ij}^{2}-H_{ij}P_{ij}\\
&\text { s.t. } \mathnormal{H}_{ij}\geq 0, \textbf{\textit{1}}^{\top}\bm{H}_{\cdot j}=1, \|\bm{H}_{\cdot j}\|_{0}={k}.
\end{aligned}
\end{equation}
where $\bm{P} = \sum_{v=1}^{V}\alpha_{v}\bm{S}^{v}$. It can be seen that problem (\ref{4.20}) is independent for different columns. Thus, we can follow the same procedure used in optimizing $\bm{S}^{v}$ to obtain the optimal solution for $\bm{H}$ below:

\begin{equation}\label{4.23}
\mathnormal{H}_{i j}=\left\{\begin{array}{cc}
\frac{b_{k+1,j}-b_{i j}}{k b_{k+1,j}-\sum_{t=1}^{k} b_{t j}} & j \leq k; \\
0 & j>k.
\end{array}\right.
\end{equation}
where $\bm{b}_{j} \in \mathbb{R}^{n \times 1}$ is a column vector with the $i$-th entry defined as $b_{ij}=\frac{1}{2}\|\bm{F}^{*}_{i \cdot}-\bm{F}^{*}_{j \cdot}\|_{2}^{2}-\mathnormal{P}_{ij}$, and $\gamma$ is set to $(k b_{k+1,j}-\sum_{t=1}^{k} b_{t j})/2$  to guarantee that $\|\bm{H}_{\cdot j}\|_{0}={k}$ holds.

\subsection{Update $\bm{\alpha}$ by Fixing Other Variables}
While fixing other variables, the corresponding objective function w.r.t $\bm{\alpha}$ becomes
\begin{equation}\label{4.24}
\begin{aligned}
&\min_{\bm{\alpha}} \!\!\sum_{v,m=1}^{V}\!\!\alpha_{v}\alpha_{m}\!\operatorname{Tr}(\bm{S}^{v}\!\bm{S}^{m\top} \!) - \operatorname{Tr}(\bm{H}^{\top}\!\sum_{v=1}^{V}\!\alpha_{v}\bm{S}^{v})\\
&\text { s.t. }\bm{\alpha}^{\top}\bm{1} = 1,\alpha_{v} \geq 0.
\end{aligned}
\end{equation}

Problem (\ref{4.24}) can be equivalently reformulated as follows:
\begin{equation}\label{4.25}
\begin{aligned}
\min_{\bm{\alpha}}& \sum_{v=1}^{V}\alpha_{v}^{2}\operatorname{Tr}(\bm{S}^{v}\bm{S}^{v\top} )+\sum_{v=1}^{V}\sum_{m \neq v}^{V}\alpha_{v}\alpha_{m}\operatorname{Tr}(\bm{S}^{v}\bm{S}^{m\top} )\\
&-\sum_{v=1}^{V}\alpha_{v}\operatorname{Tr}(\bm{H}^{\top}\bm{S}^{v})\\
\text { s.t. }&\bm{\alpha}^{\top}\bm{1} = 1,\alpha_{v} \geq 0.
\end{aligned}
\end{equation}
Problem (\ref{4.25}) is a standard quadratic objective function subject to linear constraints. It can be efficiently solved using the active-set method~\cite{E.Wong2011}.

\subsection{Update $\hat{\bm{X}}^{v}$ by Fixing Other Variables}
After fixing the other variables, the objective function w.r.t. $\hat{\bm{X}}^{v}$ becomes
\begin{equation}\label{4.1}
\begin{aligned}
\min_{\hat{\bm{X}}^{v}} & \|\hat{\bm{X}}^{v} - \bm{W}^{v}(\bm{F}^{v} + \bm{F}^{*})^{\top} \|_{\mathrm{F}}^{2}+\operatorname{Tr}(\hat{\bm{X}}^{v}\bm{L}^{v}\hat{\bm{X}}^{v\top})\\
\text { s.t. } &\bm{E}^{v} \odot (\hat{\bm{X}}^{v} - \bm{X}^{v}) = \bm{0}.
\end{aligned}
\end{equation}

By taking the derivative of Eq. (\ref{4.1}) w.r.t. $\hat{\bm{X}}^{v}$ and setting it to zero, we can obtain the optimal solution of $\hat{\bm{X}}^{v}$ as follows:
\begin{equation}\label{4.4}
\hat{\bm{X}}^{v} = \bm{R}^{v} + \bm{E}^{v} \odot (\bm{X}^{v} - \bm{R}^{v}),
\end{equation}
where $\bm{R}^{v} = \bm{W}^{v}(\bm{F}^{v}+\bm{F}^{*})^{\top}(\bm{I} + \bm{L}^{v})^{-1}$.

Algorithm 1 summarizes the optimization process of the proposed CLIM-FS. In the initialization step, $\alpha_{v}$ is set to ${1}/{V}$ for all views, $\bm{W}^{(v)}$ is initialized as a matrix with all entries equal to 1, $\bm{S}^{(v)}$ and $\bm{H}$ are initialized as $k$-nearest neighbor graph following~\cite{X.L.LiTIP2019}, $\bm{F}^{(v)}$ and $\bm{F}^{*}$ are initialized via spectral clustering~\cite{A.NgNIPS2001}, and the missing values in $\hat{\bm{X}}^{(v)}$ are initialized using the mean values of the observed entries in $\bm{X}^{v}$.

\section{Model Analysis}\label{sec:Theoretical Analysis}

\subsection{Analysis of the Collaborative Learning Mechanism of CLIM-FS}\label{sec:Collaborative Learning Mechanism}

In this section, we theoretically analyze how the proposed CLIM-FS method leverages the consensus cluster structure and the cross-view local geometric structure to enhance collaborative learning between feature selection and data imputation. Specifically, according to the update rule in Eq.~(\ref{4.4}), the imputation for sample $\bm{X}_{\cdot i}^{v}$ is given by
\begin{equation}\label{C1}
\hat{\bm{X}}_{\cdot i}^{v} = \frac{1}{1+\sum_{j=1}^{n}S_{ji}^{v}}(\bm{W}^{v}(\bm{F}_{i \cdot}^{v}+\bm{F}_{i \cdot}^{*})^{\top}+\sum_{j=1}^{n}S_{ji}^{v}\hat{\bm{X}}_{\cdot j}^{v}),
\end{equation}
In addition, the constraint $\bm{E}^{v} \odot (\hat{\bm{X}}^{v} - \bm{X}^{v}) = \bm{0}$ in Eq.~(\ref{3.5}) ensures that this imputation only affects the missing entries in $\bm{X}_{\cdot i}^{v}$, while the observed entries remain unchanged. Given the constraints $\|\bm{S}_{\cdot i}^{v}\|_{0}={k}$ and $\textbf{\textit{1}}^{\top}\bm{S}_{\cdot i}^{v}=1$ in Eq.~(\ref{3.5}), we can rewrite Eq.~(\ref{C1}) as follows:
\begin{equation}\label{C2}
\hat{\bm{X}}_{\cdot i}^{v} = \frac{1}{2}\bm{W}^{v}(\bm{F}_{i \cdot}^{v}+\bm{F}_{i \cdot}^{*})^{\top}+\frac{1}{2}\!{\sum_{j\in \mathcal{N}^{v}(i)}\!\!\!S_{ji}^{v}\hat{\bm{X}}_{\cdot j}^{v}},
\end{equation}
where $\mathcal{N}^{v}(i)$ denotes the $k$ nearest neighbors of sample $\bm{X}_{\cdot i}^{v}$. 

To provide a clearer understanding of the imputation process described in Eq.~(\ref{C2}), which is jointly guided by the feature selection results, the cluster structure, and the local geometric structure, we begin by analyzing the first term $\bm{C}_{\cdot i}^{v} \triangleq \bm{W}^{v}(\bm{F}_{i \cdot}^{v}+\bm{F}_{i \cdot}^{*})^{\top}$ in Eq.~(\ref{C2}). This analysis demonstrates how the feature selection results and consensus cluster structure contribute to the data imputation. Since sparsity is imposed on $\bm{F}^{v}$ during the optimization process, $\bm{C}_{\cdot i}^{v}$ can be approximated as follows:
\begin{equation}\label{C3}
\bm{C}_{\cdot i}^{v} \approx \bm{W}^{v}\bm{F}_{i \cdot}^{* \top}=\sum_{p=1}^{c}F_{ip}^{*}\bm{W}_{\cdot p}^{v}.
\end{equation}
From the perspective of matrix factorization-based clustering, $\bm{W}_{\cdot p}^{v}$ serves as the prototype for the $p$-th cluster, while $F_{ip}^{*}$ indicates the probability that the sample $\bm{X}_{\cdot i}^{v}$ is assigned to the $p$-th cluster~\cite{Tang.CKBS2018}. Thus, Eq.~(\ref{C3}) reveals that the proposed CLIM-FS informs data imputation by linearly combining cluster prototypes, weighted by sample-to-cluster assignment probabilities.  Moreover, the $\ell_{2,1}$-norm regularization $\|\bm{W}^{v}\|_{2,1}$ in Eq.~(\ref{3.5}) ensures that each cluster prototype is characterized by discriminative features. This implies that feature selection can guide data imputation by enhancing the feature representations of the cluster prototypes.

In addition, the second term ${\sum_{j\in \mathcal{N}^{v}(i)}S_{ji}^{v}\hat{\bm{X}}_{\cdot j}^{v}}$ in Eq.~(\ref{C2}) shows that CLIM-FS can further guide the imputation of sample $\bm{X}_{\cdot i}^{v}$ by taking a weighted average of its $k$ nearest neighbors within the corresponding view, where the weights are determined by pairwise sample similarities. Notably, although the consensus similarity matrix $\bm{H}$ across views does not appear explicitly in Eq.~(\ref{C2}), it plays an important role in facilitating the learning of $\bm{F}^*$ by capturing cross-view local geometric structures, thereby indirectly steering the data imputation.  This effect will be further demonstrated in the following analysis.

The above analysis illustrates how CLIM-FS leverages both the consensus clustering structure and the cross-view local geometric structure to guide data imputation. In what follows, we further demonstrate that the resulting imputations can, in turn, enhance feature selection performance. Previous studies have established that capturing both the cluster structure and the local geometric structure of data is crucial for effective unsupervised feature selection~\cite{X.W.LiuTNNLS2013,B.L.ChenTPAMI2022,LiZTKDE2013}. Accordingly, we present two theorems showing that the proposed method preserves these two types of data structures after imputation, thereby improving feature selection performance in the incomplete multi-view scenario. The statements and proofs of these two theorems are presented below.
\begin{theorem}\label{T1}
    For any two samples $\bm{X}_{\cdot i}^{v}$ and $\bm{X}_{\cdot j}^{v}$ with missing values, their imputed data $\hat{\bm{X}}_{\cdot i}^{v}$ and $\hat{\bm{X}}_{\cdot j}^{v}$, as obtained from Eq.~(\ref{C2}),  satisfy the following:
    \begin{enumerate}
    \item If the two samples belong to the same cluster, i.e., $\bm{F}_{i\cdot}^{*}=\bm{F}_{j\cdot}^{*}$, then 
    \begin{equation}
    \|\hat{\bm{X}}_{\cdot i}^{v} - \hat{\bm{X}}_{\cdot j}^{v}\|_{2} \leq \mu=\frac{1}{2}\sigma_{max}(\bm{W}^{v})\|\bm{F}^{v}\|_{1} + 1. 
    \end{equation}

    \item If the two samples belong to different clusters, i.e., $\bm{F}_{i\cdot}^{*} \neq \bm{F}_{j\cdot}^{*}$, and let $\Delta_{\bm{F}^{*}} = \|\bm{F}_{i \cdot}^{* \top} \!\!-\! \bm{F}_{j \cdot}^{* \top}\|_{2}$, then 
    \begin{equation}
    \|\hat{\bm{X}}_{\cdot i}^{v} \!-\! \hat{\bm{X}}_{\cdot j}^{v}\|_{2} \!\geq\! \nu\!=\!\frac{1}{2}\sigma_{min}(\bm{W}^{v})(\Delta_{\bm{F}^{*}}-\|\bm{F}^{v}\|_{1})-1,
     \end{equation}
     where $\nu > \mu $ when $\|\bm{F}^{v}\|_{1} < \frac{\sigma_{min}(\bm{W}^{v})\Delta_{\bm{F}^{*}} -4}{\sigma_{min}(\bm{W}^{v}) + \sigma_{max}(\bm{W}^{v})}$.
    \end{enumerate}
\end{theorem}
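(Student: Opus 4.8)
The plan is to start from the closed-form imputation in Eq.~(\ref{C2}) and write the pairwise difference of two imputed samples as a sum of two pieces: a \emph{factorization piece} $\frac{1}{2}\bm{W}^{v}\big[(\bm{F}_{i\cdot}^{v}+\bm{F}_{i\cdot}^{*})-(\bm{F}_{j\cdot}^{v}+\bm{F}_{j\cdot}^{*})\big]^{\top}$ and a \emph{neighbor piece} $\frac{1}{2}\big(\sum_{l\in\mathcal{N}^{v}(i)}S_{li}^{v}\hat{\bm{X}}_{\cdot l}^{v}-\sum_{l\in\mathcal{N}^{v}(j)}S_{lj}^{v}\hat{\bm{X}}_{\cdot l}^{v}\big)$, relabeling the summation index to avoid clashing with the fixed pair $i,j$. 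Before splitting into cases I would record two auxiliary estimates. First, since $S_{li}^{v}\ge 0$, $\sum_{l}S_{li}^{v}=1$, and the (imputed) data columns are normalized to unit $\ell_2$ norm, each weighted neighbor average is a convex combination of unit vectors and hence has norm at most $1$; by the triangle inequality the whole neighbor piece is bounded in norm by $1$. Second, because $\|\cdot\|_2\le\|\cdot\|_1$ for vectors and the $\ell_1$ mass of any two rows is at most the $\ell_1$ mass of the whole matrix, I have $\|(\bm{F}_{i\cdot}^{v}-\bm{F}_{j\cdot}^{v})^{\top}\|_2\le\|\bm{F}_{i\cdot}^{v}\|_1+\|\bm{F}_{j\cdot}^{v}\|_1\le\|\bm{F}^{v}\|_1$.

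For the first (intra-cluster) case, $\bm{F}_{i\cdot}^{*}=\bm{F}_{j\cdot}^{*}$ kills the shared-cluster contribution, so the factorization piece reduces to $\frac{1}{2}\bm{W}^{v}(\bm{F}_{i\cdot}^{v}-\bm{F}_{j\cdot}^{v})^{\top}$. Applying the triangle inequality, bounding this piece by $\frac{1}{2}\sigma_{max}(\bm{W}^{v})\|(\bm{F}_{i\cdot}^{v}-\bm{F}_{j\cdot}^{v})^{\top}\|_2\le\frac{1}{2}\sigma_{max}(\bm{W}^{v})\|\bm{F}^{v}\|_1$ via the operator-norm inequality $\|\bm{W}^{v}\bm{z}\|_2\le\sigma_{max}(\bm{W}^{v})\|\bm{z}\|_2$ together with the second auxiliary estimate, and adding the neighbor bound $1$, immediately yields the upper bound $\mu$.

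For the second (inter-cluster) case the key move is \emph{not} to separate the shared and view-specific cluster terms. Instead I would keep $\bm{W}^{v}\bm{z}$ intact with $\bm{z}=(\bm{F}_{i\cdot}^{*}-\bm{F}_{j\cdot}^{*})^{\top}+(\bm{F}_{i\cdot}^{v}-\bm{F}_{j\cdot}^{v})^{\top}$, apply the reverse triangle inequality to peel off the neighbor piece, then use the lower operator bound $\|\bm{W}^{v}\bm{z}\|_2\ge\sigma_{min}(\bm{W}^{v})\|\bm{z}\|_2$ (valid since $\bm{W}^{v}$ has full column rank). A second reverse triangle inequality applied \emph{inside} $\|\bm{z}\|_2$ gives $\|\bm{z}\|_2\ge\Delta_{\bm{F}^{*}}-\|(\bm{F}_{i\cdot}^{v}-\bm{F}_{j\cdot}^{v})^{\top}\|_2\ge\Delta_{\bm{F}^{*}}-\|\bm{F}^{v}\|_1$. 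Subtracting the neighbor bound $1$ then produces the lower bound $\nu=\frac{1}{2}\sigma_{min}(\bm{W}^{v})(\Delta_{\bm{F}^{*}}-\|\bm{F}^{v}\|_1)-1$; this grouping is precisely what makes both terms carry the single factor $\sigma_{min}(\bm{W}^{v})$.

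Finally, the separation guarantee $\nu>\mu$ is a one-line algebraic rearrangement: writing out $\frac{1}{2}\sigma_{min}(\bm{W}^{v})(\Delta_{\bm{F}^{*}}-\|\bm{F}^{v}\|_1)-1>\frac{1}{2}\sigma_{max}(\bm{W}^{v})\|\bm{F}^{v}\|_1+1$ and collecting the $\|\bm{F}^{v}\|_1$ terms on one side gives exactly the stated threshold $\|\bm{F}^{v}\|_1<(\sigma_{min}(\bm{W}^{v})\Delta_{\bm{F}^{*}}-4)/(\sigma_{min}(\bm{W}^{v})+\sigma_{max}(\bm{W}^{v}))$. I expect the main obstacle to be conceptual rather than computational: one must notice that the inter-cluster bound forces the shared and view-specific terms to be combined \emph{before} invoking the singular-value inequality (separating them would instead introduce $\sigma_{max}$ on the view-specific part and break the clean $\nu$ expression), and one must make explicit the unit-norm normalization of the imputed columns that underlies the $\pm 1$ constants in both bounds.
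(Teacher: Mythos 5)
Your proposal is correct and follows essentially the same route as the paper's own proof: the same decomposition of $\hat{\bm{X}}_{\cdot i}^{v} - \hat{\bm{X}}_{\cdot j}^{v}$ into the factorization term plus the neighbor residual, the same unit-norm normalization giving the $\pm 1$ constants, the same $\sigma_{max}$/triangle-inequality argument for the intra-cluster bound, the same reverse-triangle-inequality-plus-$\sigma_{min}$ argument (keeping the shared and view-specific terms grouped) for the inter-cluster bound, and the same algebraic rearrangement for the threshold on $\|\bm{F}^{v}\|_{1}$. The only cosmetic difference is that you state the auxiliary estimates (neighbor piece $\leq 1$, row-difference $\leq \|\bm{F}^{v}\|_{1}$) up front rather than inline.
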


\begin{proof}
    According to Eq.~(\ref{C2}), the difference between $\hat{\bm{X}}_{\cdot i}^{v}$ and $\hat{\bm{X}}_{\cdot j}^{v}$ is given by 
    \begin{equation}\label{D1}
    \hat{\bm{X}}_{\cdot i}^{v} - \hat{\bm{X}}_{\cdot j}^{v} = \frac{1}{2}\bm{W}^{v}(\bm{F}_{i \cdot}^{v \top}\!-\bm{F}_{j \cdot}^{v \top} \!+  \bm{F}_{i \cdot}^{* \top} \!- \bm{F}_{j \cdot}^{* \top} ) + \tilde{\epsilon},
    \end{equation}
    where $\tilde{\epsilon}=\epsilon_{i}-\epsilon_{j}$, with $\epsilon_{i}$ defined as $\epsilon_{i}\!\!=\!\!\frac{1}{2}{\sum_{j\in \mathcal{N}^{v}(i)}S_{ji}^{v}\hat{\bm{X}}_{\cdot j}^{v}}$.  Without loss of generality, we assume  $\|\hat{\bm{X}}_{\cdot j}^{v}\|_{2}= 1, \forall j\in\{1,\dots,n\}$, which can be achieved by normalizing $\hat{\bm{X}}^{v}$.

    Next, we analyze two cases. In the first case, if two samples belong to the same cluster, i.e., $\bm{F}_{i\cdot}^{*}=\bm{F}_{j\cdot}^{*}$, we have
    \begin{equation}\label{D2}
    \begin{aligned}
    &\|\hat{\bm{X}}_{\cdot i}^{v} \!-\! \hat{\bm{X}}_{\cdot j}^{v}\|_{2}\leq \|\frac{1}{2}\bm{W}^{v}(\bm{F}_{i \cdot}^{v \top}\!\!-\!\bm{F}_{j \cdot}^{v \top} \!\!+\!  \bm{F}_{i \cdot}^{* \top} \!\!-\! \bm{F}_{j \cdot}^{* \top} )\|_{2} \!+\! \|\tilde{\epsilon}\|_{2}\\
    &\leq\frac{1}{2}\sigma_{max}(\bm{W}^{v})(\|\bm{F}_{i \cdot}^{v}\|_{2}+\|\bm{F}_{j \cdot}^{v}\|_{2}) + \|\epsilon_{i}\|_{2}+\|\epsilon_{j}\|_{2}\\
    &\leq\frac{1}{2}\sigma_{max}(\bm{W}^{v})\|\bm{F}^{v}\|_{1} + 1.
    \end{aligned}
    \end{equation}
    where $\sigma_{max}(\bm{W}^{v})$ denotes the largest singular values of $\bm{W}^{v}$.

    In the second case, if two samples belong to different clusters, i.e., $\bm{F}_{i\cdot}^{*}\neq\bm{F}_{j\cdot}^{*}$, we obtain
    
    \begin{equation}\label{D4}
    \begin{aligned}
    &\|\hat{\bm{X}}_{\cdot i}^{v} \!-\! \hat{\bm{X}}_{\cdot j}^{v}\|_{2}\geq \|\frac{1}{2}\bm{W}^{v}(\bm{F}_{i \cdot}^{v \top}\!\!-\!\bm{F}_{j \cdot}^{v \top} \!\!+\!  \bm{F}_{i \cdot}^{* \top} \!\!-\! \bm{F}_{j \cdot}^{* \top} )\|_{2} \!-\! \|\tilde{\epsilon}\|_{2}\\
    &\geq\frac{1}{2}\sigma_{min}(\bm{W}^{v})\|\bm{F}_{i \cdot}^{v \top}\!\!\!-\!\bm{F}_{j \cdot}^{v \top} \!\!+\!  \bm{F}_{i \cdot}^{* \top} \!\!\!-\! \bm{F}_{j \cdot}^{* \top}\|_{2}\!-\!(\|\epsilon_{i}\|_{2}\!+\!\|\epsilon_{j}\|_{2})\\
    &\geq\frac{1}{2}\sigma_{min}(\bm{W}^{v})(\|\bm{F}_{i \cdot}^{* \top} \!\!-\! \bm{F}_{j \cdot}^{* \top}\|_{2}-\|\bm{F}_{i \cdot}^{v \top}\!\!-\!\bm{F}_{j \cdot}^{v \top}\|_{2})-1\\
    &\geq\frac{1}{2}\sigma_{min}(\bm{W}^{v})(\Delta_{\bm{F}^{*}}-\|\bm{F}^{v}\|_{1})-1\\
    \end{aligned}
    \end{equation}
    where $\sigma_{min}(\bm{W}^{v})$ denotes the smallest singular values of $\bm{W}^{v}$, and $\Delta_{\bm{F}^{*}} = \|\bm{F}_{i \cdot}^{* \top} \!\!-\! \bm{F}_{j \cdot}^{* \top}\|_{2}$.
    
    Assuming that $\|\bm{F}^{v}\|_{1} < \frac{\sigma_{min}(\bm{W}^{v})\Delta_{\bm{F}^{*}} -4}{\sigma_{min}(\bm{W}^{v}) + \sigma_{max}(\bm{W}^{v})}$, it follows that 
    \begin{equation}\label{D6}
        \frac{1}{2}\sigma_{min}(\bm{W}^{v})(\Delta_{\bm{F}^{*}}-\|\bm{F}^{v}\|_{1})-1 \!>\! \frac{1}{2}\sigma_{max}(\bm{W}^{v})\|\bm{F}^{v}\|_{1} + 1.
    \end{equation}
    Since $\bm{F}^{v}$ is enforced to be sparse via the $\ell_{1}$-norm regularization  term $\beta\| \bm{F}^{v} \|_{1}$, we can adjust the regularization parameter $\beta$ to ensure that $\|\bm{F}^{v}\|_{1} < \frac{\sigma_{min}(\bm{W}^{v})\Delta_{\bm{F}^{*}} -4}{\sigma_{min}(\bm{W}^{v}) + \sigma_{max}(\bm{W}^{v})}$ is satisfied. 
\end{proof}

Theorem~\ref{T1} shows that the imputed data preserves the cluster structure, keeping samples within the same cluster close to each other after imputation, while those from different clusters remain well separated. As a result, the imputation process can improve the performance of subsequent feature selection.

\begin{theorem}\label{T2}
    The imputed data $\{\hat{\bm{X}}^{v}\}_{v=1}^{V}$ obtained from Eq.~(\ref{C2}) has the following properties:
    \begin{enumerate}
    \item For any  pair of samples $(\bm{X}_{\cdot i}^{v},\bm{X}_{\cdot j}^{v})$ with missing values from the same view, if their within-view similarity satisfies $S_{ij}^{v} \geq \varrho$, then there exists $\omega_{1}(\varrho)$ such that
    \begin{equation}
    \|\hat{\bm{X}}_{\cdot i}^{v} - \hat{\bm{X}}_{\cdot j}^{v}\|_{2} \leq \omega_{1}, \frac{d\omega_{1}}{d\varrho} <0.
    \end{equation}

    \item For any pair of samples $(\bm{X}_{\cdot i}^{v},\bm{X}_{\cdot j}^{m})$ with missing values from different views, if their consensus similarity satisfies $H_{ij} \geq \zeta$, then there exists $\omega_{2}(\zeta)$ and $\omega_{3}(\zeta)$ such that
    \begin{equation}
    \| \bm{F}^*_{i \cdot} - \bm{F}^*_{j \cdot} \|_2^2 \leq \omega_{2},\|\hat{\bm{X}}_{\cdot i}^{v} - \hat{\bm{X}}_{\cdot j}^{m}\|_{2} \leq \omega_{3}, \frac{d\omega_{3}}{d\zeta} <0
    \end{equation}
    \end{enumerate}
\end{theorem}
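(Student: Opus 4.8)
The plan is to exploit the closed-form minimizers of $\bm{S}^v$ and $\bm{H}$ derived in the optimization section, which express each similarity value as an explicit monotone function of the underlying distance; this converts every ``large similarity'' hypothesis directly into a ``small distance'' conclusion. For the first claim I would start from the optimal solution in Eq.~(\ref{4.18.5}). Writing $D_j^v = k q_{k+1,j}^v - \sum_{t=1}^k q_{tj}^v > 0$, the formula $S_{ij}^v = (q_{k+1,j}^v - q_{ij}^v)/D_j^v$ inverts to $q_{ij}^v = q_{k+1,j}^v - S_{ij}^v D_j^v$, so the hypothesis $S_{ij}^v \geq \varrho$ yields $q_{ij}^v \leq q_{k+1,j}^v - \varrho D_j^v$. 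Substituting the definition $q_{ij}^v = \tfrac12\|\hat{\bm{X}}_{\cdot i}^v - \hat{\bm{X}}_{\cdot j}^v\|_2^2 - \alpha_v H_{ij} + \sum_{m\neq v}\alpha_v\alpha_m S_{ij}^m$ and bounding the extra terms with $\alpha_v H_{ij}\le 1$ and $\sum_{m\neq v}\alpha_v\alpha_m S_{ij}^m\ge 0$ gives $\tfrac12\|\hat{\bm{X}}_{\cdot i}^v - \hat{\bm{X}}_{\cdot j}^v\|_2^2 \le q_{k+1,j}^v - \varrho D_j^v + 1$. Setting $\omega_1(\varrho) = \sqrt{2(q_{k+1,j}^v - \varrho D_j^v + 1)}$ proves the bound, and $d\omega_1/d\varrho = -D_j^v/\omega_1 < 0$.

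For the second claim I would mirror this argument at the level of the consensus graph. The closed form of $\bm{H}$ in Eq.~(\ref{4.23}), with $b_{ij} = \tfrac12\|\bm{F}_{i\cdot}^* - \bm{F}_{j\cdot}^*\|_2^2 - P_{ij}$ and $P_{ij} = \sum_v \alpha_v S_{ij}^v$, inverts identically: from $H_{ij}\ge\zeta$ I get $b_{ij} \le b_{k+1,j} - \zeta D_j'$ with $D_j' = k b_{k+1,j} - \sum_{t=1}^k b_{tj} > 0$, and since $0\le P_{ij}\le 1$ this produces $\|\bm{F}_{i\cdot}^* - \bm{F}_{j\cdot}^*\|_2^2 \le 2(b_{k+1,j} - \zeta D_j' + 1) =: \omega_2(\zeta)$, a quantity strictly decreasing in $\zeta$. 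This settles the first inequality of part~2 and, crucially, shows that a high consensus similarity forces the two samples into nearly the same shared cluster profile.

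To reach the cross-view data bound I would propagate $\omega_2$ through the imputation rule. Writing both imputations via Eq.~(\ref{C2}) and subtracting, the triangle inequality splits the difference into a shared-component term $\tfrac12(\bm{W}^v\bm{F}_{i\cdot}^{*\top} - \bm{W}^m\bm{F}_{j\cdot}^{*\top})$, a view-specific term in $\bm{F}_{i\cdot}^v$ and $\bm{F}_{j\cdot}^m$, and a neighbor-aggregation term. In the shared-component term I would add and subtract $\bm{W}^v\bm{F}_{j\cdot}^{*\top}$ to rewrite it as $\bm{W}^v(\bm{F}_{i\cdot}^* - \bm{F}_{j\cdot}^*)^\top + (\bm{W}^v - \bm{W}^m)\bm{F}_{j\cdot}^{*\top}$; the first piece is at most $\sigma_{max}(\bm{W}^v)\sqrt{\omega_2(\zeta)}$, while the second piece, the view-specific term (bounded via the $\ell_1$-sparsity of $\bm{F}^v,\bm{F}^m$ exactly as in Theorem~\ref{T1}), and the neighbor term (bounded using $\|\hat{\bm{X}}_{\cdot l}^v\|_2=1$ together with $\textbf{\textit{1}}^\top\bm{S}_{\cdot i}^v=1$) are all independent of $\zeta$ and collapse into one constant $C$. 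Hence $\omega_3(\zeta) = \tfrac12\sigma_{max}(\bm{W}^v)\sqrt{\omega_2(\zeta)} + C$, and the chain rule gives $d\omega_3/d\zeta = \tfrac14\sigma_{max}(\bm{W}^v)\,\omega_2'(\zeta)/\sqrt{\omega_2(\zeta)} < 0$ since $\omega_2'(\zeta) = -2D_j' < 0$.

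The step I expect to be the main obstacle is precisely this last propagation, for two reasons. First, it is a genuinely \emph{cross-view} comparison, so $\|\hat{\bm{X}}_{\cdot i}^v - \hat{\bm{X}}_{\cdot j}^m\|_2$ is only meaningful when the two views share a common ambient dimension (or are compared in an aligned space); I would state this compatibility assumption explicitly and isolate the inter-view prototype discrepancy $(\bm{W}^v - \bm{W}^m)\bm{F}_{j\cdot}^{*\top}$ as a residual. Second, unlike Part~1 and the $\omega_2$ bound, there is no closed form tying $H_{ij}$ directly to the data distance; the dependence on $\zeta$ must be routed entirely through the shared indicator $\bm{F}^*$, so the argument succeeds only if every term not involving $\bm{F}_{i\cdot}^* - \bm{F}_{j\cdot}^*$ is absorbed into the $\zeta$-independent constant $C$. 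Verifying that the view-specific and neighbor contributions are uniformly bounded, so that $C$ is finite and the monotonicity in $\zeta$ is preserved, is the technical crux.
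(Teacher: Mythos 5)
Your proof is correct, but it follows a genuinely different route from the paper's. For Part~1 the paper works with the imputation update itself: writing $\hat{\bm{X}}_{\cdot i}^{v}-\hat{\bm{X}}_{\cdot j}^{v}$ from Eq.~(\ref{C2}), splitting out the term $S_{ji}^{v}\hat{\bm{X}}_{\cdot j}^{v}$, and using the simplex constraint $\textbf{\textit{1}}^{\top}\bm{S}_{\cdot i}^{v}=1$ together with the normalization $\|\hat{\bm{X}}_{\cdot j}^{v}\|_{2}=1$ to obtain the affine bound $\omega_{1}(\varrho)=\tfrac{3}{2}-\varrho+\tfrac{1}{2}\|\bm{C}_{\cdot i}^{v}\|_{2}$; this exhibits the causal mechanism (the imputed point places weight at least $\varrho/2$ on its similar neighbor). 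You instead invert the closed-form graph update Eq.~(\ref{4.18.5}), exploiting that at the fixed point $S_{ij}^{v}$ is affine and decreasing in $q_{ij}^{v}$, which contains the squared imputed distance --- a fixed-point consistency argument that uses the graph step rather than the imputation step, and whose constants ($q_{k+1,j}^{v}$, $D_{j}^{v}$) are column-specific rather than universal. Similarly, for $\omega_{2}$ the paper uses a global energy argument --- the optimality of $\bm{F}^{*}$ gives $\operatorname{Tr}(\bm{F}^{*\top}\bm{L}_{H}\bm{F}^{*})\le\mathcal{J}$, and nonnegativity of all graph terms yields $\omega_{2}=2\mathcal{J}/\zeta$ --- whereas you invert the closed form of $\bm{H}$ in Eq.~(\ref{4.23}); both are valid since only existence of such a bound is required. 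For $\omega_{3}$, the paper declares the step ``analogous to Theorem 1'' and states $\omega_{3}(\zeta)=\|\bm{W}^{v}\|_{2}(\|\bm{F}^{v}\|_{1}+\|\bm{F}^{m}\|_{1}+2\mathcal{J}/\zeta)$; your explicit decomposition is actually more careful at exactly this point: you propagate $\sqrt{\omega_{2}}$ where a norm (not a squared norm) is needed, whereas the paper inserts the squared-norm bound $2\mathcal{J}/\zeta$ directly; you isolate the cross-view prototype discrepancy $(\bm{W}^{v}-\bm{W}^{m})\bm{F}_{j\cdot}^{*\top}$ and the neighbor-aggregation terms into a $\zeta$-independent constant rather than silently dropping them; and you flag the dimension-compatibility assumption that makes the cross-view distance $\|\hat{\bm{X}}_{\cdot i}^{v}-\hat{\bm{X}}_{\cdot j}^{m}\|_{2}$ well-defined, which the paper leaves implicit. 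In short, the paper's route buys interpretability (the bound comes straight from the imputation mechanics) and simpler constants, while yours buys rigor precisely at the weakest step of the published argument.
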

\begin{proof}
    1) According to Eq.~(\ref{C2}), we have
    \begin{equation}
    \Delta_{ij}^{v}= \|\frac{1}{2}S_{ji}^{v}\hat{\bm{X}}_{\cdot j}^{v}-\hat{\bm{X}}_{\cdot j}^{v}+\frac{1}{2}\sum_{p=1,p \neq j}^{n}S_{pi}^{v}\hat{\bm{X}}_{\cdot p}^{v}+\frac{1}{2}\bm{C}_{\cdot i}^{v}\|_{2},
    \end{equation}
    where $\Delta_{ij}^{v}=\|\hat{\bm{X}}_{\cdot i}^{v} - \hat{\bm{X}}_{\cdot j}^{v}\|_{2}$.

    Since $S_{ji}^{v} \geq \varrho$, we write $S_{ji}^{v}$ as $S_{ji}^{v}=\varrho+\eta$, where $\eta \geq 0$. Thus, $\sum_{p=1,p \neq j}^{n}S_{pi}^{v}=1-\varrho-\eta$. Then, the following inequality holds:
    \begin{equation}
    \begin{aligned}
    \Delta_{ij}^{v}&\!=\!\|(\frac{\varrho}{2}-1)\hat{\bm{X}}_{\cdot j}^{v}+\frac{1}{2}\eta\hat{\bm{X}}_{\cdot j}^{v}+\frac{1}{2}\sum_{p \neq j}S_{pi}^{v}\hat{\bm{X}}_{\cdot p}^{v}+\frac{1}{2}\bm{C}_{\cdot i}^{v}\|_{2}\\
    &\!\leq\!(1\!\!-\!\!\frac{\varrho}{2})\|\hat{\bm{X}}_{\cdot j}^{v}\|_{2}\!+\!\frac{1}{2}(\eta\|\hat{\bm{X}}_{\cdot j}^{v}\|_{2}\!+\!\sum_{p \neq j}S_{pi}^{v}\|\hat{\bm{X}}_{\cdot p}^{v}\|_{2}\!+\!\|\bm{C}_{\cdot i}^{v}\|_{2}).\\
    \end{aligned}
    \end{equation}

   Similar to the derivation of Theorem 1, we assume that $\|\hat{\bm{X}}_{\cdot j}^{v}\|_{2}=1$ for all $j \in {1, \dots, n}$. Therefore, by defining $\omega_{1}(\varrho)=\frac{3}{2}-\varrho+\frac{1}{2}\|\bm{C}_{\cdot i}^{v}\|_{2}$, we obtain
    \begin{equation}\label{H1}
    \Delta_{ij}^{v}= \|\hat{\bm{X}}_{\cdot i}^{v} - \hat{\bm{X}}_{\cdot j}^{v}\|_{2} \leq \omega_{1}.
    \end{equation}
    Moreover, we have ${d\omega_{1}}/{d\varrho}=-1<0$. 

    2) Since $\bm{F}^*$ is the optimal solution to problem~(\ref{4.11}), it minimizes the objective function over the feasible set. Let $\mathcal{J}$ denote the objective value of problem~(\ref{4.11}) at $\bm{F}^*$. Then, we have
    \begin{equation}
    \begin{aligned}
    &\operatorname{Tr}(\bm{F}^{*\top} \bm{L}_H \bm{F}^*) = \frac{1}{2} \sum_{p,q=1}^{n} H_{pq} \| \bm{F}^*_{p \cdot} - \bm{F}^*_{q \cdot} \|_2^2 \leq \mathcal{J}.\\
    \end{aligned}
    \end{equation}

    Given that $\mathnormal{H}_{pq}\geq 0, \forall p,q \in \{1,\dots,n\}$ and $H_{ij} \geq \zeta$, it follows that
    \begin{equation}\label{H2}
    \begin{aligned}
    H_{ij} \| \bm{F}^*_{i \cdot} - \bm{F}^*_{j \cdot} \|_2^2 \leq 2\mathcal{J}
    \Rightarrow \| \bm{F}^*_{i \cdot} - \bm{F}^*_{j \cdot} \|_2^2 \leq \omega_{2}(\zeta)=\frac{2\mathcal{J}}{\zeta}.
    \end{aligned}
    \end{equation}
    Eq.~(\ref{H2}) shows that $\| \bm{F}^*_{i \cdot} - \bm{F}^*_{j \cdot} \|_2^2$ is upper-bounded by a quantity that is inversely proportional to $H_{ij}$. Consequently, a larger similarity $H_{ij}$ encourages greater consistency between the clustering labels of samples $\bm{X}_{\cdot i}^{v}$ and $\bm{X}_{\cdot j}^{m}$.

    Analogous to the derivation in Theorem~\ref{T1}, we can obtain the following inequality:
    \begin{equation}\label{H3}
    \begin{aligned}
    \|\hat{\bm{X}}_{\cdot i}^{v} - \hat{\bm{X}}_{\cdot j}^{m}\|_{2} \leq \omega_{3}(\zeta)= \|\bm{W}^{v}\|_{2}(\| \bm{F}^{v}\|_{1}+\|\bm{F}^{m}\|_{1} + \frac{2\mathcal{J}}{\zeta}).
    \end{aligned}
    \end{equation}
    In addition, we have ${d\omega_{3}}/{d\zeta}=-\frac{2\|\bm{W}^{v}\|_{2}\mathcal{J}}{\zeta^2}<0$. 
\end{proof}

Theorem~\ref{T2} demonstrates that, for any two samples within the same view, higher within-view similarity results in a smaller distance between their imputed values. Furthermore, for any two samples from different views, higher cross-view similarity leads to greater consistency in clustering labels and a smaller distance between their imputed data. These properties indicate that the imputation results help preserve the local geometric structure, which is crucial for enhancing feature selection performance in the incomplete multi-view scenario.

\subsection{Analysis of Algorithm Convergence}\label{ConvergenceAnalysis}
Since problem~(\ref{3.5}) is not jointly convex with respect to all variables, we propose Algorithm 1 to optimize it by iteratively solving seven subproblems, namely Eqs. (\ref{4.5}), (\ref{4.8}), (\ref{4.11}), (\ref{4.17}), (\ref{4.19}), (\ref{4.24}), and (\ref{4.1}). To prove the convergence of Algorithm 1, we demonstrate that each subproblem converges monotonically. To this end, we first present Theorem~\ref{Convergence1}, which shows that updating $\bm{W}^{(v)}$ while keeping the other variables fixed ensures a monotonic decrease in the objective function of Eq.~(\ref{4.5}).

\begin{theorem}\label{Convergence1}
    The update rule for $\bm{W}^{(v)}$ in Algorithm 1 guarantees that the objective function in Eq.~(\ref{4.5}) is non-increasing.
\end{theorem}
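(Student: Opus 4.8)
The plan is to recognize the update of $\bm{W}^{(v)}$ as an iteratively reweighted least squares scheme and to exploit the concavity of the square-root function, in the spirit of the standard convergence argument for $\ell_{2,1}$-norm minimization~\cite{X.L.LiTNNLS2018}. First I would fix notation: write $h(\bm{W}^{v}) = \|\hat{\bm{X}}^{v} - \bm{W}^{v}(\bm{F}^{v} + \bm{F}^{*})^{\top}\|_{\mathrm{F}}^{2}$ for the smooth fidelity term, let $\bm{W}^{v}_{t}$ be the current iterate, and let $\bm{W}^{v}_{t+1}$ be the matrix obtained by solving the Sylvester equation~(\ref{4.7}). By construction the reweighting matrix $\bm{D}^{v}$ in~(\ref{4.7}) is built from $\bm{W}^{v}_{t}$, with $D^{v}_{ii} = 1/(2\sqrt{\|(\bm{W}^{v}_{t})_{i\cdot}\|_{2}^{2}+\epsilon})$. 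Since $\epsilon>0$, every diagonal entry is strictly positive, so $\bm{D}^{v}$ is positive definite and the surrogate objective $h(\bm{W}^{v}) + \lambda\operatorname{Tr}(\bm{W}^{v\top}\bm{D}^{v}\bm{W}^{v})$ of~(\ref{4.6}) is a strictly convex quadratic in $\bm{W}^{v}$. Hence the stationary point characterized by~(\ref{4.7}) is its unique global minimizer, giving the key minimization inequality $h(\bm{W}^{v}_{t+1}) + \lambda\operatorname{Tr}(\bm{W}^{v\top}_{t+1}\bm{D}^{v}\bm{W}^{v}_{t+1}) \le h(\bm{W}^{v}_{t}) + \lambda\operatorname{Tr}(\bm{W}^{v\top}_{t}\bm{D}^{v}\bm{W}^{v}_{t})$.

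Next I would expand the reweighted traces row-wise via $\operatorname{Tr}(\bm{W}^{v\top}\bm{D}^{v}\bm{W}^{v}) = \sum_{i}\|\bm{W}^{v}_{i\cdot}\|_{2}^{2}/(2\sqrt{\|(\bm{W}^{v}_{t})_{i\cdot}\|_{2}^{2}+\epsilon})$, and add the common constant $\lambda\sum_{i}\epsilon/(2\sqrt{\|(\bm{W}^{v}_{t})_{i\cdot}\|_{2}^{2}+\epsilon})$ to both sides so that the numerators become the $\epsilon$-smoothed row norms. The minimization inequality then reads
\[
h(\bm{W}^{v}_{t+1}) + \lambda\sum_{i} \frac{\|(\bm{W}^{v}_{t+1})_{i\cdot}\|_{2}^{2}+\epsilon}{2\sqrt{\|(\bm{W}^{v}_{t})_{i\cdot}\|_{2}^{2}+\epsilon}} \le h(\bm{W}^{v}_{t}) + \lambda\sum_{i} \frac{\|(\bm{W}^{v}_{t})_{i\cdot}\|_{2}^{2}+\epsilon}{2\sqrt{\|(\bm{W}^{v}_{t})_{i\cdot}\|_{2}^{2}+\epsilon}}.
\]

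The crux is then a scalar inequality coming from the concavity of $t\mapsto\sqrt{t}$: for any $a\ge 0$ and $b>0$, $\sqrt{a}-a/(2\sqrt{b}) \le \sqrt{b}-b/(2\sqrt{b})$, which is equivalent to $(\sqrt{a}-\sqrt{b})^{2}\ge 0$. I would apply this with $a=\|(\bm{W}^{v}_{t+1})_{i\cdot}\|_{2}^{2}+\epsilon$ and $b=\|(\bm{W}^{v}_{t})_{i\cdot}\|_{2}^{2}+\epsilon$ (note $b>0$ since $\epsilon>0$), sum over $i$, multiply by $\lambda$, and add the result to the displayed minimization inequality. The reweighted quadratic terms cancel exactly, leaving
\[
h(\bm{W}^{v}_{t+1}) + \lambda\sum_{i} \sqrt{\|(\bm{W}^{v}_{t+1})_{i\cdot}\|_{2}^{2}+\epsilon} \le h(\bm{W}^{v}_{t}) + \lambda\sum_{i} \sqrt{\|(\bm{W}^{v}_{t})_{i\cdot}\|_{2}^{2}+\epsilon}.
\]
Since $\sum_{i}\sqrt{\|\bm{W}^{v}_{i\cdot}\|_{2}^{2}+\epsilon}$ is precisely the $\epsilon$-smoothed surrogate of $\|\bm{W}^{v}\|_{2,1}$ that~(\ref{4.6}) uses in place of~(\ref{4.5}), this establishes that the objective is non-increasing under the update, and letting $\epsilon\to 0$ recovers the claim for~(\ref{4.5}) itself.

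The main obstacle I anticipate is conceptual rather than computational: I must be precise about \emph{which} functional is certified to decrease. The monotone decrease holds for the smoothed objective $h(\bm{W}^{v}) + \lambda\sum_{i}\sqrt{\|\bm{W}^{v}_{i\cdot}\|_{2}^{2}+\epsilon}$, because $\bm{D}^{v}$ is exactly the majorizing weight associated with this smoothed norm rather than with the nondifferentiable $\ell_{2,1}$ term. I would therefore make the correspondence between the surrogate~(\ref{4.6}) and the smoothed objective explicit, and emphasize that the entire argument rests on $\epsilon>0$ guaranteeing the positive definiteness of $\bm{D}^{v}$, which is what licenses treating the solution of~(\ref{4.7}) as the exact global minimizer of a convex problem rather than merely a stationary point.
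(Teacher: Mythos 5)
Your proposal is correct and takes essentially the same route as the paper: the exact-minimization inequality for the reweighted surrogate Eq.~(\ref{4.6}), combined with the scalar concavity inequality (which the paper invokes as Nie et al.'s lemma $\|\bm{p}\|_{2}-\|\bm{p}\|_{2}^{2}/(2\|\bm{q}\|_{2}) \leq \|\bm{q}\|_{2}-\|\bm{q}\|_{2}^{2}/(2\|\bm{q}\|_{2})$), summed over rows and added so that the reweighted quadratic terms cancel. If anything you are more scrupulous than the paper, which silently sets $\epsilon=0$ when identifying $\operatorname{Tr}(\bm{W}^{v\top}\bm{D}^{v}\bm{W}^{v})$ with $\sum_{i}\|\bm{W}^{v}_{i\cdot}\|_{2}^{2}/(2\|\bm{W}^{v}_{i\cdot}\|_{2})$, whereas you track the $\epsilon$-smoothed norm exactly and correctly flag that, for fixed $\epsilon>0$, what is certified to decrease is the smoothed objective, the claim for Eq.~(\ref{4.5}) itself holding only up to the same $O(\sqrt{\epsilon})$ slack the paper ignores.
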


\begin{proof}
    Let $\widehat{\bm{W}}^{v}$ represent the updated $\bm{W}^{v}$ at the current iteration. Since $\widehat{\bm{W}}^{v}$ is the optimal solution to problem~(8), we have
    \begin{equation}\label{5.1}
    \begin{aligned}
    &\|\hat{\bm{X}}^{v} - \widehat{\bm{W}}^{v}(\bm{F}^{v} + \bm{F}^{*})^{\top} \|_{\mathrm{F}}^{2} + \lambda\operatorname{Tr}(\widehat{\bm{W}}^{v\top}\bm{D}^{v}\widehat{\bm{W}}^{v})\\
    &\leq\|\hat{\bm{X}}^{v} - \bm{W}^{v}(\bm{F}^{v} + \bm{F}^{*})^{\top} \|_{\mathrm{F}}^{2} + \lambda\operatorname{Tr}(\bm{W}^{v\top}\bm{D}^{v}\bm{W}^{v}).
    \end{aligned}
    \end{equation}

    According to~\cite{F.P.NieNIPS2010}, for any two nonzero vectors $\bm{p}$ and $\bm{q}$, the inequality $\|\bm{p}\|_{2}-\frac{\|\bm{p}\|_{2}^{2}}{2\|\bm{q}\|_{2}} \leq \|\bm{q}\|_{2}-\frac{\|\bm{q}\|_{2}^{2}}{2\|\bm{q}\|_{2}}$ holds, and thus we obtain

    \begin{equation}\label{5.2}
    \begin{aligned}
    & \|\widehat{\bm{W}}_{i \cdot}^{v}\|_2- \frac{\|\widehat{\bm{W}}_{i \cdot}^{v}\|_{2}^{2}}{2\|{\bm{W}}_{i \cdot}^{v}\|_2} \leq \|\bm{W}_{i \cdot}^{v}\|_2- \frac{\|\bm{W}_{i \cdot}^{v}\|_{2}^{2}}{2\|\bm{W}_{i \cdot}^{v}\|_2}\\
    \Rightarrow &\sum_{i=1}^{d_{v}}(\|\widehat{\bm{W}}_{i \cdot}^{v}\|_2-\frac{\|\widehat{\bm{W}}_{i \cdot}^{v}\|_{2}^{2}}{2\|{\bm{W}}_{i \cdot}^{v}\|_2}) \leq \sum_{i=1}^{d_{v}}(\|\bm{W}_{i \cdot}^{v}\|_2-\frac{\|\bm{W}_{i \cdot}^{v}\|_{2}^{2}}{2\|\bm{W}_{i \cdot}^{v}\|_2}). \\
    \end{aligned}
    \end{equation}

    Since $\sum_{i=1}^{d_{v}}{\|{\bm{W}}_{i \cdot}^{v}\|_{2}^{2}}/{2\|{\bm{W}}_{i \cdot}^{v}\|_2}=\operatorname{Tr}(\bm{W}^{v\top}\!\bm{D}^{v}\bm{W}^{v}$), we can rewrite Eq.~(\ref{5.2}) as follows:
    \begin{equation}\label{5.3}
    \|\widehat{\bm{W}}^{v}\|_{2,1}\!-\!\operatorname{Tr}(\widehat{\bm{W}}^{v\top}\!\bm{D}^{v}\widehat{\bm{W}}^{v})\!\leq\!\|\bm{W}^{v}\|_{2,1}\!-\!\operatorname{Tr}(\bm{W}^{v\top}\!\bm{D}^{v}\bm{W}^{v}).
    \end{equation}

    By combining Eq.~(\ref{5.1}) and Eq.~(\ref{5.3}), we have
    \begin{equation}\label{5.4}
    \begin{aligned}
    &\|\hat{\bm{X}}^{v} - \widehat{\bm{W}}^{v}(\bm{F}^{v} + \bm{F}^{*})^{\top} \|_{\mathrm{F}}^{2} + \lambda\|\widehat{\bm{W}}^{v}\|_{2,1}\\
    &\leq\|\hat{\bm{X}}^{v} - \bm{W}^{v}(\bm{F}^{v} + \bm{F}^{*})^{\top} \|_{\mathrm{F}}^{2} + \lambda\|\bm{W}^{v}\|_{2,1}.
    \end{aligned}
    \end{equation}
    Hence, updating $\bm{W}^{(v)}$ leads to a monotonic decrease in the objective value in Eq.~(7).
\end{proof}

In addition, since the optimization problem w.r.t. $\bm{F}^{v}$ in Eq.~(\ref{4.8}) is convex and its gradient $\nabla(\bm{F}^{v}) = -2\hat{\bm{X}}^{v\top}\bm{W}^{v} + 2(\bm{F}^{v} + \bm{F}^{*})\bm{W}^{v\top}\bm{W}^{v}$ is Lipschitz continuous, the proximal gradient descent method is guaranteed to converge when updating $\bm{F}^{(v)}$~\cite{Beck A}. Furthermore, the updates for $\bm{F}^{*}$, $\bm{S}^{v}$, $\bm{H}$ and $\hat{\bm{X}}^{v}$ are guaranteed to converge, as each admits a closed-form solution provided in Eqs. (\ref{4.14}), (\ref{4.18.5}), (\ref{4.23}), (\ref{4.4}), respectively. In addition, updating $\alpha$ by solving the quadratic objective function in Eq. (\ref{4.24}) using the interior-point method has been proven to converge, as shown in~\cite{Boyd S P}. Therefore, with each iteration, the alternating optimization in Algorithm 1 monotonically decreases the objective function in Eq. (\ref{3.5}) until convergence is achieved. The convergence behavior of Algorithm~1 will be empirically validated in the experimental section.
\subsection{Time Complexity Analysis} 
For each iteration of Algorithm 1, the computational complexity of updating $\bm{W}^{v}$ is $\mathcal{O}(n{d}_{v}{c})$. The computational costs for updating $\bm{F}^{v}$ and $\bm{F}^{*}$ are $\mathcal{O}(n{d}_{v}c)$ and $\mathcal{O}(ndc+{n}^{2}{c})$, respectively, where $d=\sum_{v=1}^{V}{d}_{v}$. Updating $\hat{\bm{X}}^{v}$ requires $\mathcal{O}(n^{2}{d}_{v})$ operations, while updating $\bm{\alpha}$ has a complexity of $\mathcal{O}(ncV^{2})$. The updates for $\bm{S}^{v}$ and $\bm{H}$ involve only element-wise operations, so their computational cost is negligible.  In summary, the computational complexity of each iteration in Algorithm 1 is $\mathcal{O}(ndc+n^{2}d+ ncV^{2})$.

\begin{table}[t]
\centering
\caption{Statistics of different multi-view datasets}\label{Data}
\vspace*{-0.2cm}
\setlength{\tabcolsep}{1.4mm}
    \begin{tabular}{@{\extracolsep{\fill}}lcccc}
            \toprule
            Datasets  & Samples & Features & Views & Classes\\
            \midrule
            MSRC      & 210 &24/576/512/256/254 & 5 & 7 \\

            ORL    & 400  & 512/59/864/254 & 4 & 40 \\

            BBCSport  & 544 &3183/3203 & 2 & 5 \\

            Reuters      & 1200 &2000/2000/2000/2000/2000 & 5 & 6 \\
			
            100leaves      & 1600 & 64/64/64 & 3 & 100 \\

            LandUse    & 2100 & 20/59/40 & 3 & 21 \\

            NUS  & 2400  & 64/144/73/128/225/500 & 6 & 12 \\

            Aloi & 11025 & 73/13/64/64 & 4 & 100 \\
            \bottomrule
    \end{tabular}
\end{table}

\section{Experiments}\label{sec:Experimental}

\subsection{Experimental Schemes}
\subsubsection{Datasets} In this work, we evaluate the performance of the proposed method using eight real-world multi-view datasets: three object recognition datasets (MSRC\footnotemark[1], NUS~\cite{Z.X.HuIF2020}, and Aloi\footnotemark[2]), one face image dataset (ORL\footnotemark[1]), two text datasets (BBCSport\footnotemark[1] and Reuters\footnotemark[1]), one plant leaf image dataset (100leaves\footnotemark[1]), and one satellite image dataset (LandUse\footnotemark[3]). The statistical details of these datasets are summarized in Table~\ref{Data}. To simulate different missing data scenarios in our experiments, we adopt the following approaches. For the view-missing scenario, following ~\cite{H.Tao2019}, we randomly select $\delta\%$ of the samples and randomly remove one view from each selected sample. For the variable-missing scenario, according to~\cite{J.X.You2020}, we randomly remove $\delta\%$ of the entries from the data matrix of each view. For the mixed-missing scenario, we first randomly select $\delta\%$ of the samples and randomly remove one view from each selected sample; then, for each view, we randomly remove $\delta\%$ of the entries from the data matrix of the remaining samples. In all experiments, we vary $\delta$ within ${10\%, 20\%, 30\%, 40\%, 50\%}$ to simulate different degrees of incompleteness for each multi-view dataset.

\footnotetext[1]{https://gitee.com/zhangfk/multi-view-dataset}

\footnotetext[2]{\url{https://elki-project.github.io/datasets/multi_view}}

\footnotetext[3]{http://weegee.vision.ucmerced.edu/datasets/landuse.html}

\begin{table*}[t]
\centering
\caption{In the mixed-missing scenario, means (\%) of ACC and NMI of different methods on eight multi-view datasets.}
\label{Mixed-missing results}
\resizebox{\textwidth}{!}{
\begin{tabular}{lcccccccccccccccc}
\toprule
\multirow{2}{*}{Methods} 
& \multicolumn{2}{c}{MSRC} & \multicolumn{2}{c}{ORL} 
& \multicolumn{2}{c}{BBCSport} & \multicolumn{2}{c}{Reuters} 
& \multicolumn{2}{c}{100leaves} & \multicolumn{2}{c}{LandUse} 
& \multicolumn{2}{c}{NUS} & \multicolumn{2}{c}{Aloi} \\

\cmidrule(r){2-3} \cmidrule(r){4-5} \cmidrule(r){6-7} \cmidrule(r){8-9} 
\cmidrule(r){10-11} \cmidrule(r){12-13} \cmidrule(r){14-15} \cmidrule(r){16-17}
~ & ACC & NMI & ACC & NMI & ACC & NMI & ACC & NMI 
  & ACC & NMI & ACC & NMI & ACC & NMI & ACC & NMI \\
\midrule
CLIM-FS     & \textbf{70.34} & \textbf{62.02} & \textbf{52.94} & \textbf{69.27} & \textbf{41.54} & \textbf{7.78}  & \textbf{35.08} & \textbf{15.53} & \textbf{44.52} & \textbf{67.60} & \textbf{20.98} & \textbf{22.30} & \textbf{23.98} & \textbf{12.21} & \textbf{39.06} & \textbf{61.08} \\
AllFea       & 31.62 & 20.01 & 28.73 & 42.43 & 35.61 &  1.21 & 19.53 &  2.63 & 22.83 & 48.87 & 13.03 & 12.25 & 15.92 &  6.57 &  8.28 & 17.77 \\
CvLP\_DCL    & 55.35 & 46.02 & 29.74 & 43.52 & 36.35 &  2.25 & 26.21 &  7.52 & 21.05 & 47.93 & 14.61 & 12.61 & 22.58 &  9.31 & 19.12 & 41.80 \\
WLTL         & 57.76 & 51.48 & 30.87 & 45.75 & 36.31 &  1.98 & 26.57 &  7.43 & 20.87 & 48.01 & 15.26 & 15.63 & 21.57 &  9.61 & 15.27 & 35.31 \\
PTFS         & 56.38 & 48.79 & 33.77 & 48.28 & 36.11 &  1.50 & 26.66 &  7.19 & 21.98 & 49.28 & 15.69 & 15.89 & 21.79 &  9.87 & 16.52 & 39.76 \\
UKMFS        & 55.06 & 49.47 & 30.64 & 45.31 & 36.20 &  2.02 & 28.05 &  7.96 & 21.47 & 49.06 & 15.26 & 15.32 & 21.36 &  9.52 & 16.31 & 37.18 \\
CVFS         & 41.55 & 27.98 & 34.15 & 48.64 & 36.22 &  2.21 & 25.35 &  7.81 & 18.66 & 45.28 & 12.54 & 10.93 & 15.60 &  6.78 &  8.32 & 17.78 \\
C$^2$IMUFS   & 48.58 & 39.94 & 30.22 & 44.50 & 36.49 &  2.13 & 25.08 &  7.47 & 21.34 & 48.40 & 11.65 &  9.64 & 21.05 &  9.23 & 20.88 & 43.73 \\
UIMUFSLR     & 61.78 & 53.46 & 33.97 & 48.70 & 35.91 &  1.95 & 27.96 &  9.10 & 20.92 & 48.07 & 15.23 & 15.66 & 21.01 &  9.10 & 16.80 & 38.19 \\
UNIFIER      & 61.64 & 59.73 & 31.59 & 46.93 & 36.07 &  2.54 & 27.79 &  8.86 & 23.86 & 50.89 & 15.32 & 15.73 & 16.60 &  5.77 & 15.11 & 37.08 \\
TIME-FS      & 49.47 & 39.26 & 30.12 & 44.50 & 36.52 &  2.37 & 29.61 &  8.19 & 21.01 & 48.00 & 15.29 & 15.68 & 21.09 &  9.28 & 17.64 & 40.15 \\
TERUIMUFS    & 62.38 & 54.76 & 32.42 & 46.74 & 36.17 &  1.90 & 29.71 & 10.73 & 21.77 & 48.31 & 15.15 & 15.65 & 21.94 &  9.82 & 20.35 & 43.67 \\
\bottomrule
\end{tabular}
}
\end{table*}

\subsubsection{Comparison Methods}  To validate the effectiveness of the proposed CLIM-FS, we compare it with several state-of-the-art methods, as described below:
\begin{itemize}
    \item \textbf{AllFea}: It utilizes all original features.
    \item \textbf{CvLP\_DCL}~\cite{CvLP_DCL}: It learns cross-view similarity graphs to preserve the local structure of the data.
    \item \textbf{WLTL}~\cite{WLTL}: It integrates multi-view spectral clustering with weighted low-rank tensor learning to generate pseudo labels for feature selection.
    \item \textbf{PTFS}~\cite{PTFS}: It learns a unified tensor graph with block-diagonal constraints to simultaneously capture cross-view high-order correlations and discriminative partition information.
    \item \textbf{UKMFS}~\cite{R.Y.HuAAAI2025}: It employs binary hashing to obtain weakly-supervised labels, which guide feature selection and similarity graph learning.
    \item \textbf{CVFS}~\cite{CVFS}: It incorporates MUFS into a weighted non-negative matrix factorization model to select features from incomplete multi-view data.
    \item \textbf{C$^2$IMUFS}~\cite{C2IMUFS}: It exploits the complementary and consensus information across different views from incomplete data to learn complete similarity graphs for local structure preservation and feature selection.
    \item \textbf{UIMUFSLR}~\cite{UIMUFSLR}:  It exploits adaptive sample-weighted graph fusion to mitigate the impact of missing data, while enforcing a low-redundancy constraint in a low-dimensional space to obtain discriminative features.
    \item \textbf{UNIFIER}~\cite{UNIFIER}: It integrates multi-view feature selection and missing-view imputation into a joint learning framework.
    \item \textbf{TIME-FS}~\cite{TIME-FS}: It uses tensor CP decomposition to construct a consistent anchor graph and view preference weight matrix for incomplete multi-view feature selection.
    \item \textbf{TERUIMUFS}~\cite{TERUIMUFS}:  It jointly optimizes tensor low-rank representation, sample diversity regularization, and self-representation learning for unsupervised feature selection on unbalanced incomplete multi-view data.
\end{itemize}

To ensure a fair comparison, we use grid search to tune the hyperparameters of all competing methods according to the ranges recommended in their original papers, and report the optimal results.  For our method, the parameters  $\beta$ and $\lambda$ are selected via grid search over $\{10^{-3}, 10^{-2}, 10^{-1}, 1, 10, 10^2, 10^3\}$. Since determining the optimal number of selected features remains a challenging problem~\cite{J.D.Li2017}, we choose the proportion of selected features from $\{10\%, 20\%, 30\%, 40\%, 50\%\}$. Given that MUFS methods designed for complete data cannot be directly applied to incomplete multi-view data, we first impute the missing values using the mean of the available data before applying these methods. In addition, for IMUFS methods that can only address the view-missing issue, we first impute missing variables with the mean of the available data before applying these methods to mixed-missing and variable-missing scenarios. For performance evaluation, we follow the commonly used strategy in unsupervised feature selection~\cite{S.Solorio2020,X.DongTPAMI2025}, employing two clustering metrics, Clustering Accuracy (ACC) and Normalized Mutual Information (NMI), to assess the quality of features selected by different methods. We run the $k$-means clustering algorithm 50 times on the selected features and report the average results.

\begin{figure*}[!htbp]
\centering
\includegraphics[width=0.98\textwidth]{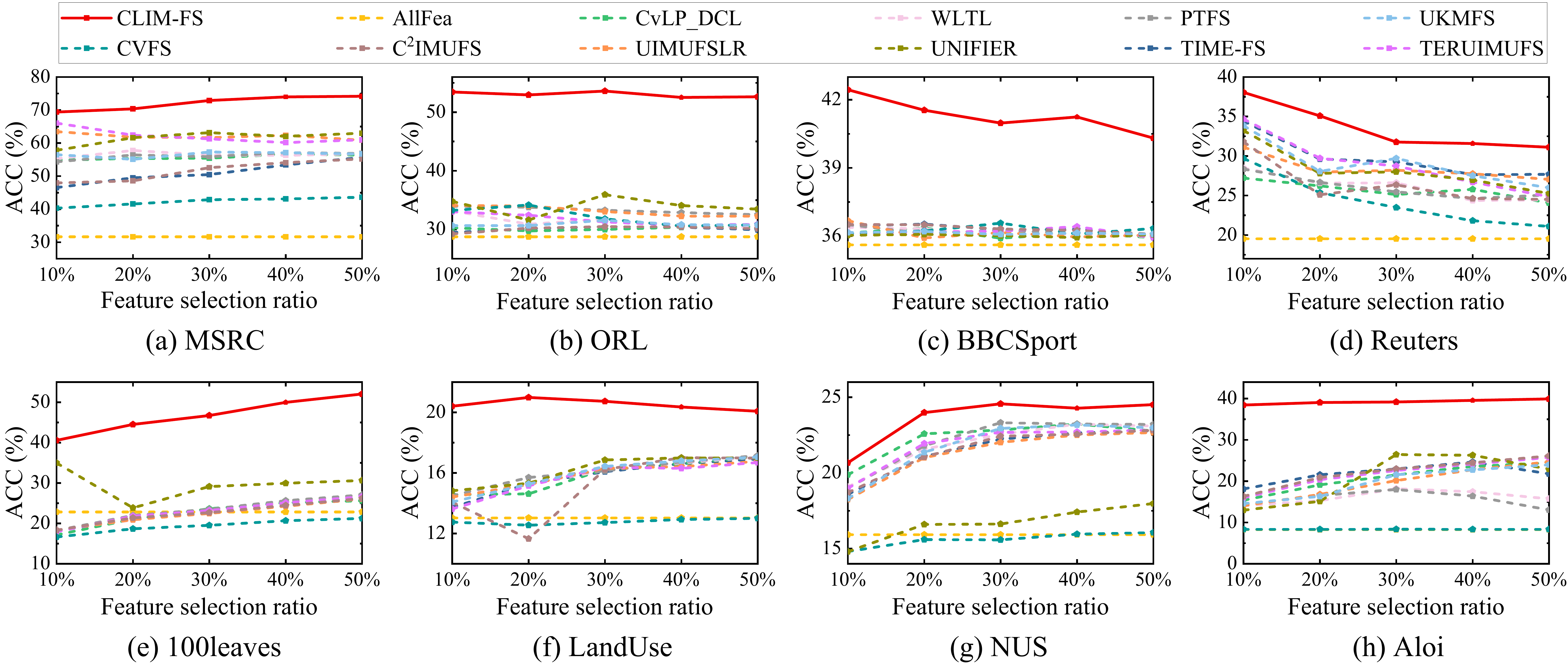}
\caption{ACC of different methods on eight multi-view datasets with different feature selection ratios in the mixed-missing scenario.}\label{ACC-feature-var}
\end{figure*}

\begin{figure*}[!htbp]
\centering
\includegraphics[width=0.98\textwidth]{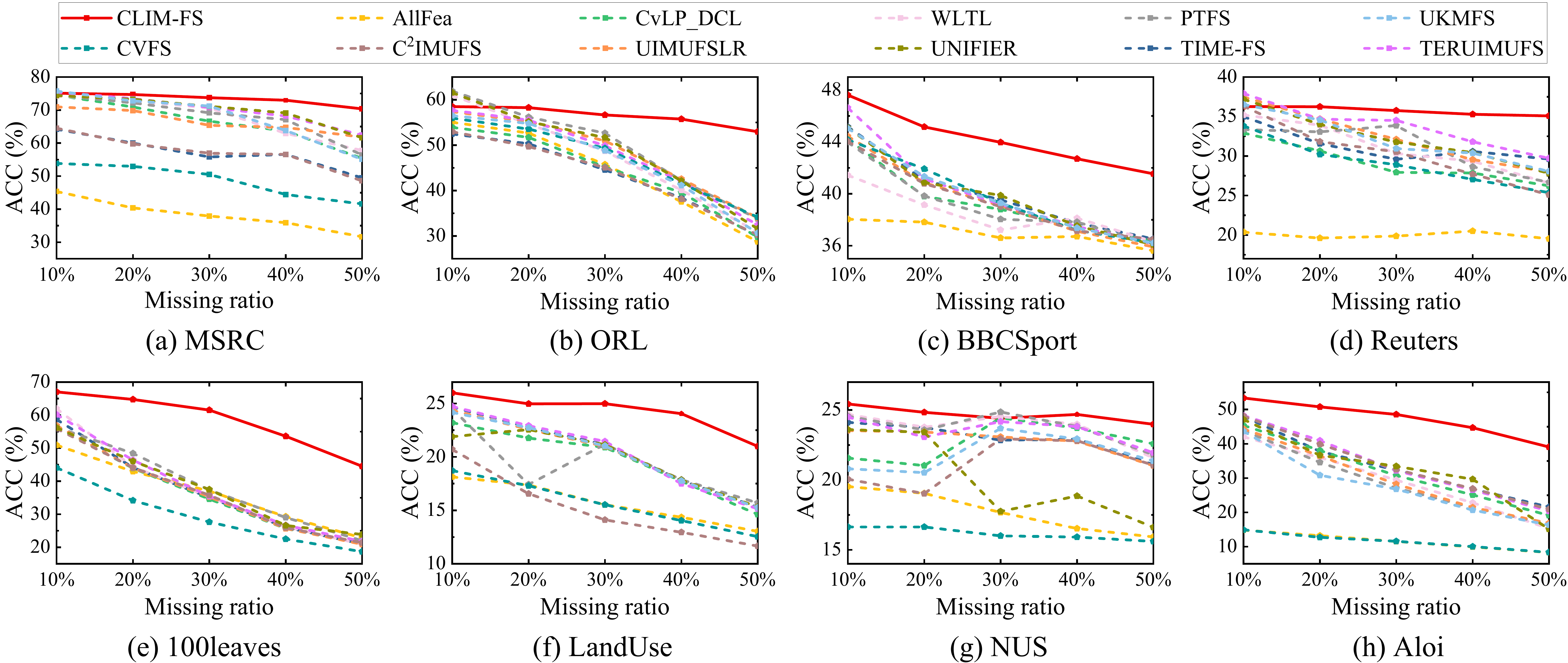}
\caption{ACC of different methods on eight multi-view datasets with different missing ratios in the mixed-missing scenario.}\label{ACC-missing-var}
\end{figure*}

\subsection{Performance Analyses}

\begin{table*}[t]
\centering
\caption{In the variable-missing scenario, means (\%) of ACC and NMI of different methods on eight multi-view datasets.}\label{Variable-missing results}
\resizebox{\textwidth}{!}{
\begin{tabular}{lcccccccccccccccc}
\toprule
\multirow{2}{*}{Methods} 
& \multicolumn{2}{c}{MSRC} & \multicolumn{2}{c}{ORL} 
& \multicolumn{2}{c}{BBCSport} & \multicolumn{2}{c}{Reuters} 
& \multicolumn{2}{c}{100leaves} & \multicolumn{2}{c}{LandUse} 
& \multicolumn{2}{c}{NUS} & \multicolumn{2}{c}{Aloi} \\

\cmidrule(r){2-3} \cmidrule(r){4-5} \cmidrule(r){6-7} \cmidrule(r){8-9} 
\cmidrule(r){10-11} \cmidrule(r){12-13} \cmidrule(r){14-15} \cmidrule(r){16-17}
~ & ACC & NMI & ACC & NMI & ACC & NMI & ACC & NMI 
  & ACC & NMI & ACC & NMI & ACC & NMI & ACC & NMI \\
\midrule

{CLIM-FS} & \textbf{75.81} & \textbf{68.24} & \textbf{61.20} & \textbf{79.54} & \textbf{47.03} & \textbf{15.95} & \textbf{33.59} & \textbf{14.67} & \textbf{69.36} & \textbf{86.09} & \textbf{26.26} & \textbf{32.52} & \textbf{26.06} & \textbf{13.01} & \textbf{27.10} & \textbf{53.47}\\

{AllFea} & 36.04 & 24.49 & 47.38 & 66.15 & 36.58 & 2.90 & 18.82 & 1.99 & 30.14 & 58.73 & 14.94 & 15.07 & 17.59 & 8.24 & 9.29 & 20.02 \\

{CvLP\_DCL} & 65.92 & 56.92 & 44.34 & 63.11 & 25.22 & 1.00 & 19.95 & 0.71 & 25.40 & 53.26 & 17.71 & 17.56 & 24.45 & 11.35 & 23.05 & 46.56 \\

{WLTL} & 68.04 & 60.40 & 45.34 & 64.46 & 25.60 & 1.11 & 20.43 & 2.84 & 27.54 & 56.45 & 17.62 & 19.88 & 23.37 & 10.57 & 25.16 & 50.72 \\

{PTFS} & 67.93 & 58.64 & 48.04 & 66.65 & 25.15 & 1.06 & 23.23 & 4.65 & 27.11 & 55.65 & 16.28 & 15.30 & 21.46 & 9.71 & 24.03 & 46.70 \\

{UKMFS} & 36.70 & 25.32 & 44.25 & 62.44 & 25.10 & 1.10 & 25.37 & 8.19 & 18.44 & 47.41 & 13.85 & 13.27 & 15.54 & 7.15 & 18.13 & 41.97 \\

{CVFS} & 45.41 & 35.99 & 47.64 & 66.72 & 39.03 & 6.02 & 26.01 & 8.34 & 23.16 & 50.97 & 13.89 & 13.44 & 15.65 & 7.16 & 18.61 & 43.94 \\

{C$^2$IMUFS} & 55.33 & 46.78 & 48.12 & 66.65 & 35.24 & 5.08 & 26.22 & 8.74 & 22.45 & 50.01 & 13.92 & 13.50 & 17.24 & 6.55 & 21.42 & 44.81 \\

{UIMUFSLR} & 66.69 & 60.54 & 47.10 & 66.54 & 35.84 & 1.10 & 20.31 & 3.07 & 27.27 & 55.50 & 17.36 & 19.98 & 22.66 & 10.62 & 19.73 & 44.11 \\

{UNIFIER} & 70.69 & 60.56 & 52.54 & 69.27 & 39.00 & 6.37 & 27.23 & 9.74 & 27.24 & 55.24 & 17.33 & 20.39 & 22.28 & 10.34 & 10.41 & 25.86 \\

{TIME-FS} & 65.77 & 57.78 & 46.02 & 65.56 & 27.01 & 1.13 & 22.26 & 3.77 & 26.92 & 55.01 & 17.00 & 19.50 & 22.13 & 9.09 & 22.03 & 45.98 \\

{TERUIMUFS} & 67.93 & 58.64 & 48.04 & 66.65 & 25.47 & 1.05 & 25.55 & 6.80 & 26.08 & 54.06 & 18.41 & 20.07 & 23.31 & 11.46 & 25.02 & 48.33 \\
\bottomrule
\end{tabular}}
\end{table*}

\begin{table*}[t]
\centering
\caption{In the view-missing scenario, means (\%) of ACC and NMI of different methods on eight multi-view datasets.}\label{View-missing results}
\resizebox{\textwidth}{!}{
\begin{tabular}{lccccccccccccccccc}
\toprule
\multirow{2}{*}{Methods} 
& \multicolumn{2}{c}{MSRC} & \multicolumn{2}{c}{ORL} 
& \multicolumn{2}{c}{BBCSport} & \multicolumn{2}{c}{Reuters} 
& \multicolumn{2}{c}{100leaves} & \multicolumn{2}{c}{LandUse} 
& \multicolumn{2}{c}{NUS} & \multicolumn{2}{c}{Aloi} \\

\cmidrule(r){2-3} \cmidrule(r){4-5} \cmidrule(r){6-7} \cmidrule(r){8-9} 
\cmidrule(r){10-11} \cmidrule(r){12-13} \cmidrule(r){14-15} \cmidrule(r){16-17}
~ & ACC & NMI & ACC & NMI & ACC & NMI & ACC & NMI 
  & ACC & NMI & ACC & NMI & ACC & NMI & ACC & NMI \\
\midrule

{CLIM-FS} & \textbf{63.13} & \textbf{53.57} & \textbf{58.01} & \textbf{75.70} & \textbf{41.12} & \textbf{7.01} & \textbf{37.73} & \textbf{17.67} & \textbf{54.89} & \textbf{75.49} & \textbf{21.92} & \textbf{24.38} & \textbf{24.80} & \textbf{12.56} & \textbf{44.94} & \textbf{66.08} \\

{AllFea} & 43.56 & 34.38 & 50.50 & 67.38 & 37.45 & 4.04 & 21.73 & 4.67 & 43.65 & 68.05 & 16.31 & 17.48 & 19.30 & 9.54 & 14.87 & 33.05 \\

{CvLP\_DCL} & 61.27 & 50.04 & 49.28 & 66.94 & 39.63 & 6.68 & 28.79 & 10.08 & 50.01 & 68.71 & 17.17 & 18.16 & 19.46 & 9.92 & 41.25 & 63.48 \\

{WLTL} & 60.39 & 50.75 & 53.86 & 71.22 & 38.39 & 4.79 & 31.67 & 13.69 & 50.93 & 69.25 & 18.04 & 19.93 & 20.56 & 9.17 & 40.48 & 60.97 \\

{PTFS} & 63.06 & 53.84 & 55.75 & 71.08 & 38.58 & 5.57 & 31.11 & 13.07 & 49.67 & 72.44 & 19.02 & 22.77 & 19.22 & 9.08 & 40.99 & 63.71 \\

{UKMFS} & 58.85 & 50.03 & 49.51 & 67.79 & 39.50 & 6.04 & 33.21 & 14.34 & 46.60 & 69.65 & 18.92 & 22.25 & 17.04 & 9.32 & 40.36 & 60.01 \\

{CVFS} & 51.52 & 41.51 & 51.82 & 68.97 & 40.48 & 6.79 & 32.36 & 13.86 & 39.28 & 64.78 & 18.61 & 20.84 & 16.67 & 8.34 & 14.62 & 33.10 \\

{C$^2$IMUFS} & 50.89 & 42.65 & 49.15 & 67.13 & 37.65 & 5.47 & 33.65 & 14.80 & 46.39 & 67.64 & 19.83 & 20.09 & 20.04 & 10.85 & 42.70 & 64.93 \\

{UIMUFSLR} & 60.43 & 53.92 & 55.41 & 71.68 & 39.36 & 6.49 & 33.97 & 13.73 & 45.91 & 69.28 & 20.03 & 22.09 & 24.02 & 11.81 & 40.68 & 63.15 \\

{UNIFIER} & 61.02 & 50.81 & 56.74 & 73.20 & 39.44 & 6.21 & 35.40 & 15.70 & 50.13 & 69.16 & 20.27 & 21.79 & 20.86 & 10.95 & 21.86 & 42.71 \\

{TIME-FS} & 60.86 & 51.61 & 52.76 & 69.89 & 38.90 & 6.20 & 32.12 & 11.73 & 49.83 & 72.13 & 17.48 & 21.45 & 19.59 & 9.13 & 42.16 & 64.38 \\

{TERUIMUFS} & 61.66 & 52.16 & 53.36 & 70.56 & 39.83 & 6.89 & 33.56 & 13.42 & 47.20 & 70.20 & 20.18 & 23.53 & 24.17 & 11.36 & 42.89 & 64.66 \\
\bottomrule
\end{tabular}
}
\end{table*}

In this section, we evaluate the performance of CLIM-FS by comparing it with other competing methods under various incomplete multi-view scenarios, including the mixed-missing scenario and its two specific cases: variable-missing and view-missing scenarios.
\subsubsection{Performance Comparison under Mixed-Missing Scenario}
Table~\ref{Mixed-missing results} summarizes the ACC and NMI values of all methods on eight multi-view datasets under the mixed missing scenario with a 50\% missing ratio and 20\% feature selection ratio. The best results are highlighted in bold. As shown in Table~\ref{Mixed-missing results}, CLIM-FS achieves the best performance on all datasets in terms of ACC and NMI. Specifically, on the MSRC, ORL, 100leaves, and Aloi datasets, CLIM-FS achieves an average improvement of over 17\% in both ACC and NMI. For the Reuters and LandUse datasets, CLIM-FS outperforms all competing methods by more than 6\% and 7\% in  ACC and NMI, respectively. 
As to the BBCSport and NUS datasets, CLIM-FS also achieves an average increase of more than 3\% in both ACC and NMI. In addition, CLIM-FS significantly outperforms the baseline AllFea on all eight datasets, demonstrating its effectiveness in reducing feature dimensionality.

Furthermore, to comprehensively evaluate the effectiveness of CLIM-FS, we present the performance of all methods under various feature selection ratios and missing ratios. Due to space constraints, only the experimental results for ACC are reported, while the results for NMI are presented in Figs. 1 and 2 of the supplementary materials. Fig. \ref{ACC-feature-var} shows the ACC values of all methods for different feature selection ratios, with the missing ratio fixed at 50\%. As illustrated in Fig. \ref{ACC-feature-var}, CLIM-FS outperforms the other methods in most cases when the feature selection ratio ranges from 10\% to 50\%. Additionally, Fig. \ref{ACC-missing-var} presents the ACC values of all methods across varying missing ratios, with the feature selection ratio fixed at 20\%, while the NMI results are provided in Fig. 2 of the supplementary materials. It can be observed that CLIM-FS consistently achieves the best performance in most cases.

\subsubsection{Performance Comparison under Variable-Missing and View-Missing Scenarios}
Tables~\ref{Variable-missing results} and~\ref{View-missing results} present the performance of all methods across eight datasets under the variable-missing and view-missing scenarios, respectively, with a 50\% missing ratio and 20\% feature selection ratio. In the variable-missing scenario, as shown in Table~\ref{Variable-missing results}, CLIM-FS outperforms other competitors across all datasets. Specifically, on the MSRC and 100leaves datasets, CLIM-FS achieves average improvements of over 17\% and 18\% in ACC and NMI, respectively. On the ORL, BBCSport, and LandUse datasets, CLIM-FS achieves more than a 10\% average improvement in both ACC and NMI. For the Reuters, NUS, and Aloi datasets, CLIM-FS surpasses other competitors, with average improvements of over 5\% in ACC and 3\% in NMI. 

Furthermore, as shown in Table~\ref{View-missing results}, CLIM-FS continues to achieve the best performance under the view-missing scenario. Specifically, on the Aloi dataset, CLIM-FS obtains average improvements of more than 10\% in both ACC and NMI. On the MSRC, ORL, Reuters, and 100leaves datasets, CLIM-FS achieves average improvements of over 5\% for both ACC and NMI. On the BBCSport, LandUse, and NUS datasets, CLIM-FS still yields the best results in terms of both ACC and NMI.

Based on the above analysis, it can be concluded that CLIM-FS consistently outperforms other methods under mixed-missing, variable-missing, and view-missing settings. The superior performance of CLIM-FS is attributed to the integration of feature selection and the imputation of missing views and variables into a unified learning framework. Furthermore, simultaneously leveraging both consistency and diversity across different views further enhances the effectiveness of CLIM-FS.
\begin{figure}[t]
\centering
\includegraphics[width=0.49\textwidth]{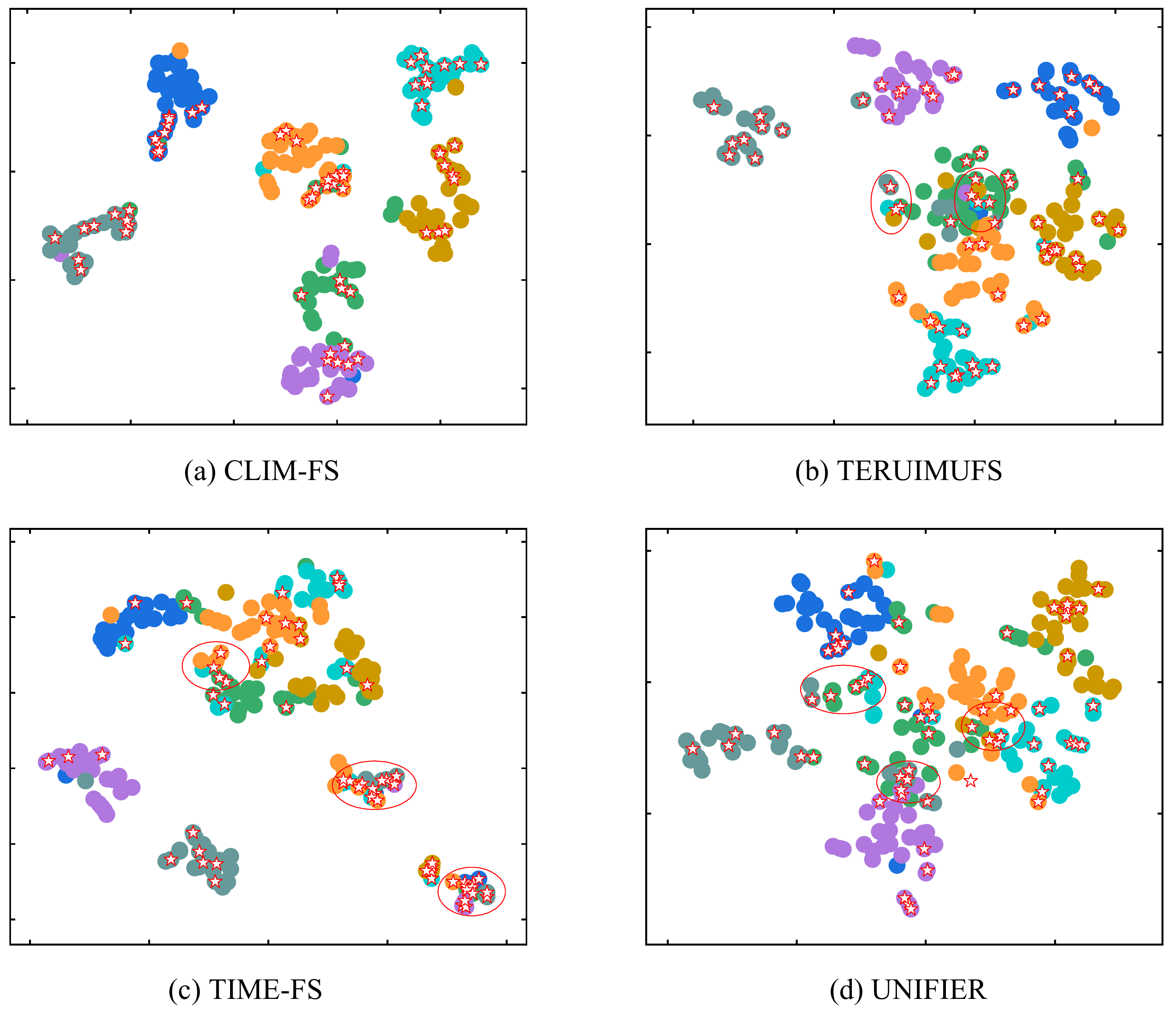}
\caption{t-SNE visualizations of features selected by four ``one-stage'' methods on MSRC dataset.}\label{tsne}
\end{figure}
\subsection{Visualization}\label{Visualization}
In Section~\ref{sec:Collaborative Learning Mechanism}, we theoretically demonstrate that the proposed CLIM-FS ensures that the imputed data preserve the original intra-cluster compactness and inter-cluster separability (Theorem~\ref{T1}), and that cross-view similarity-guided imputation preserves the local geometric structures induced by similarity (Theorem~\ref{T2}). In this section, we empirically validate these theoretical guarantees via a series of visualization experiments.

To this end, t-SNE is employed to project the selected features of the imputed MSRC dataset into a two-dimensional space. Fig.~\ref{tsne} shows the visualization results for the four ``one-stage'' methods at a 30\% missing ratio, with 20\% of the features selected. Missing instances are highlighted with red stars. As shown in Fig.~\ref{tsne}, the competing methods fail to effectively separate imputed samples belonging to different clusters, resulting in substantial overlap among these samples, as highlighted by the red circles in the figure. In contrast, CLIM-FS successfully distinguishes imputed samples from different clusters while maintaining compactness within each cluster. These results demonstrate that our joint imputation and feature selection method keeps samples within the same cluster tightly grouped after imputation, while clearly separating those from different clusters. Furthermore, we visualize the pairwise relationships among samples in the MSRC dataset using heatmaps, where relationships are measured by a Gaussian kernel applied to both the data imputed by four ``one-stage'' methods and the ground-truth MSRC data. As shown in Fig.~\ref{HeatMap}, the heatmap generated by CLIM-FS most closely resembles that of the ground-truth data, indicating that our method effectively preserves the proximity of similar samples after imputation. These results demonstrate that CLIM-FS can effectively preserve the cluster structure and local geometric structure in the incomplete multi-view scenario, which is beneficial for improving feature selection performance.

\begin{figure*}[t]
\centering
\includegraphics[width=0.98\textwidth]{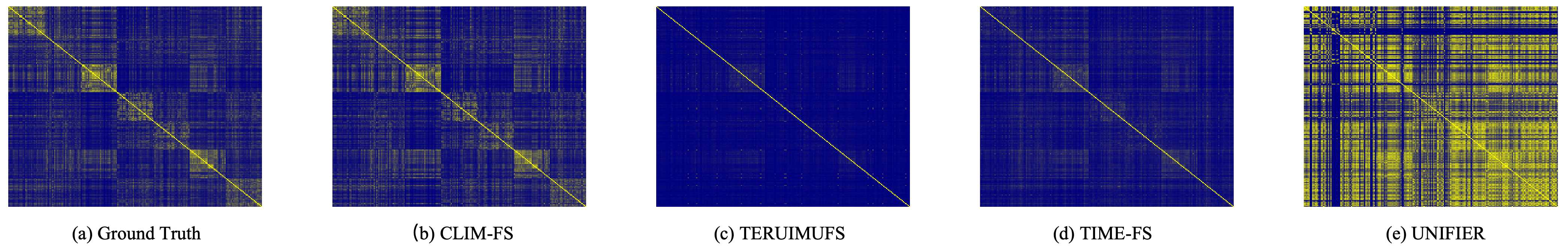}
\caption{Sample similarity structure visualizations on MSRC dataset: ground-truth vs. four ``one-stage'' methods.}\label{HeatMap}
\end{figure*}
\subsection{Convergence Analysis}\label{Convergence}
In Section~\ref{ConvergenceAnalysis}, we theoretically prove the convergence of Algorithm 1. In this section, we proceed to experimentally investigate the convergence behavior of Algorithm 1. Fig. \ref{convergence} illustrates the convergence curves of CLIM-FS on MSRC, Reuters, 100leaves and Aloi datasets, with the $x$-axis representing the iteration number and the $y$-axis denoting the objective values of CLIM-FS. As shown in Fig. \ref{convergence}, the objective value decreases monotonically and converges within approximately 40 iterations, demonstrating the effectiveness of the proposed algorithm.

\begin{figure}[t]
\centering
\includegraphics[width=0.48\textwidth]{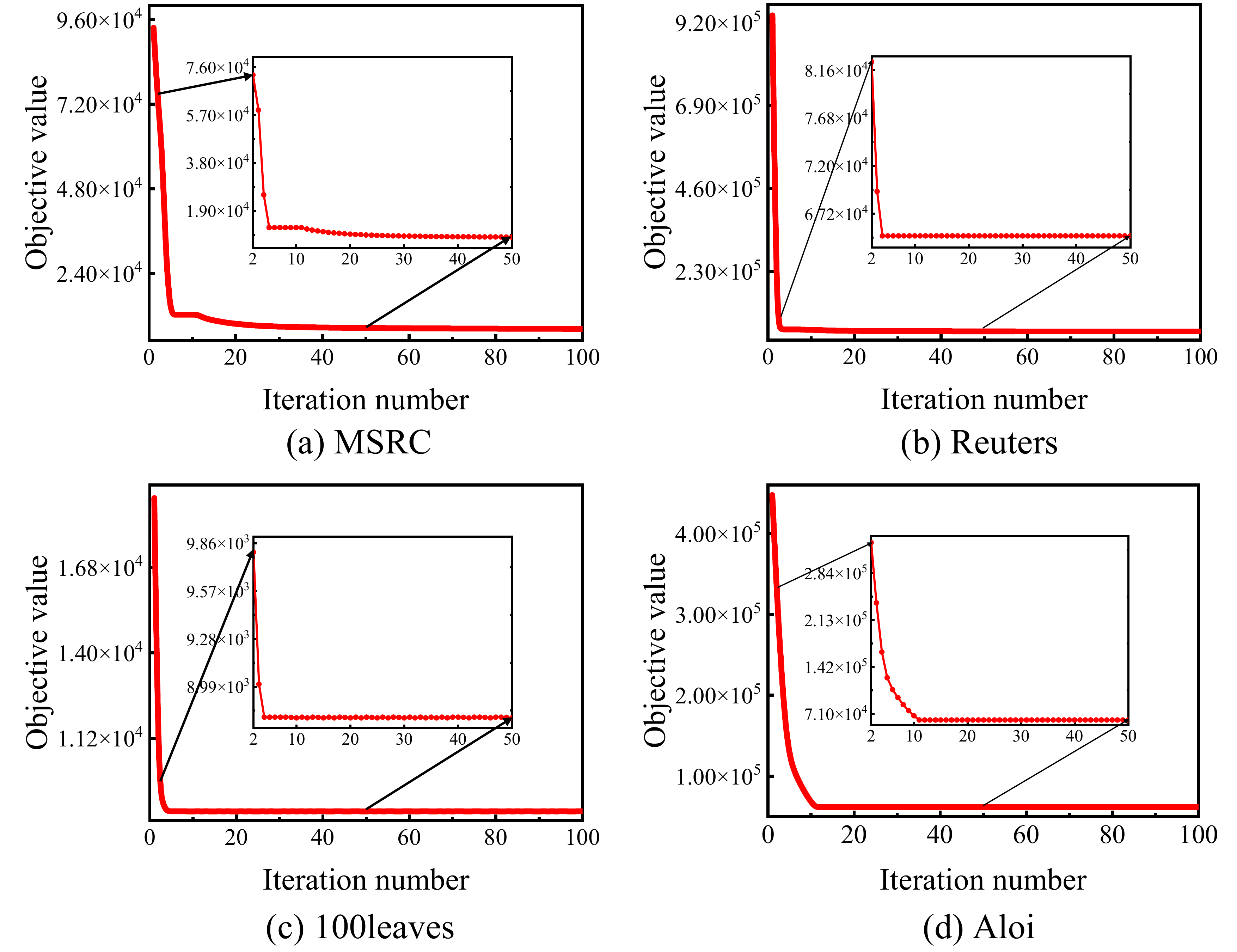}
\caption{Object function values of CLIM-FS at different iterations on MSRC, Reuters, 100leaves and Aloi datasets.}\label{convergence}
\end{figure}

\subsection{Parameter Sensitivity Analysis}
The objective function of CLIM-FS involves four parameters: $\lambda$, $\xi_v$, $\gamma$, and $\beta$. However, the regularization parameters $\xi_v$ and $\gamma$ are automatically determined during the optimization of $\bm{S}^{v}$ and $\bm{H}$, respectively. Therefore, we focus our investigation on the influence of $\lambda$ and $\beta$. Fig.~\ref{sensitivity} shows how ACC and NMI vary when one parameter is fixed and the other is varied.  The experimental results show that the performance of CLIM-FS is relatively sensitive to $\beta$, while it remains stable with respect to changes in $\lambda$. Moreover, CLIM-FS usually achieves prominent performances when $\beta$ is less than 10 and $\lambda$ exceeds 0.01. Hence,$\lambda$ and $\beta$ can be empirically fine-tuned within these ranges to obtain optimal results.

\subsection{Ablation Study}
In this section, we conduct an ablation study to evaluate the effectiveness of each component in CLIM-FS. Specifically, we compare the performance of CLIM-FS with its three variants: (\romannumeral1) CLIM-FS-\uppercase\expandafter{\romannumeral1}: the adaptive data imputation module is removed from Eq.~(\ref{3.5}), and missing values are instead filled with mean values. (\romannumeral2) CLIM-FS-\uppercase\expandafter{\romannumeral2}: the consensus cluster structure learning module is removed from Eq.~(\ref{3.5}). (\romannumeral3) CLIM-FS-\uppercase\expandafter{\romannumeral3}: the consistency and diversity-based similarity graph learning module is removed from Eq.~(\ref{3.5}). Figure~\ref{ablation} shows the ablation results in terms of ACC and NMI across eight multi-view datasets. The results demonstrate that CLIM-FS consistently outperforms all three variants on every dataset, highlighting the effectiveness of adaptive data imputation, cross-view cluster structure learning, and consistency- and diversity-based similarity graph learning in enhancing feature selection performance. 

\begin{figure}[t]
\centering
\includegraphics[width=0.48\textwidth]{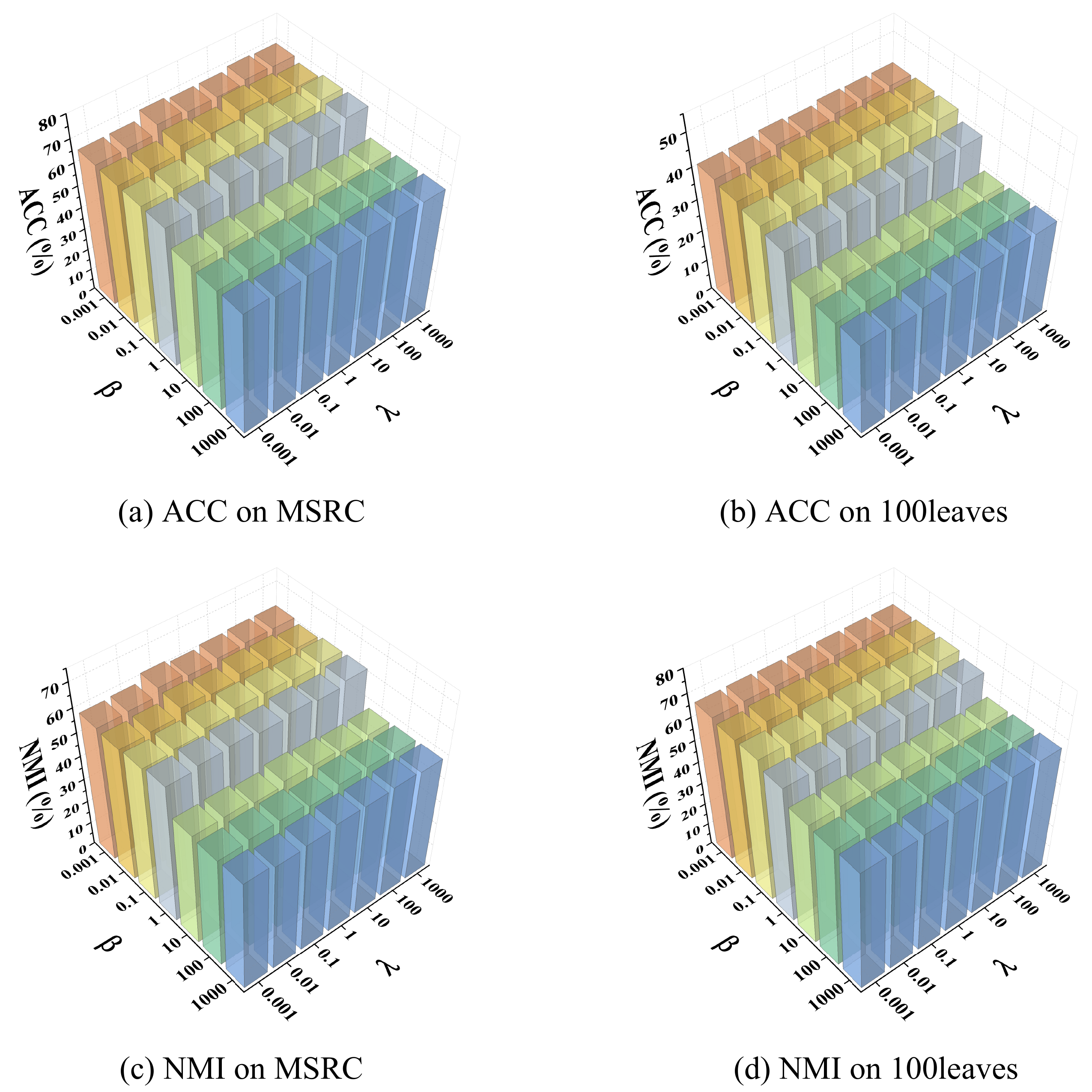}
\caption{ACC and NMI of CLIM-FS w.r.t different values of parameters $\lambda$ and $\beta$ on MSRC and 100leaves datasets.}\label{sensitivity}
\end{figure}

\begin{figure}[t]
\centering
\includegraphics[width=0.48\textwidth]{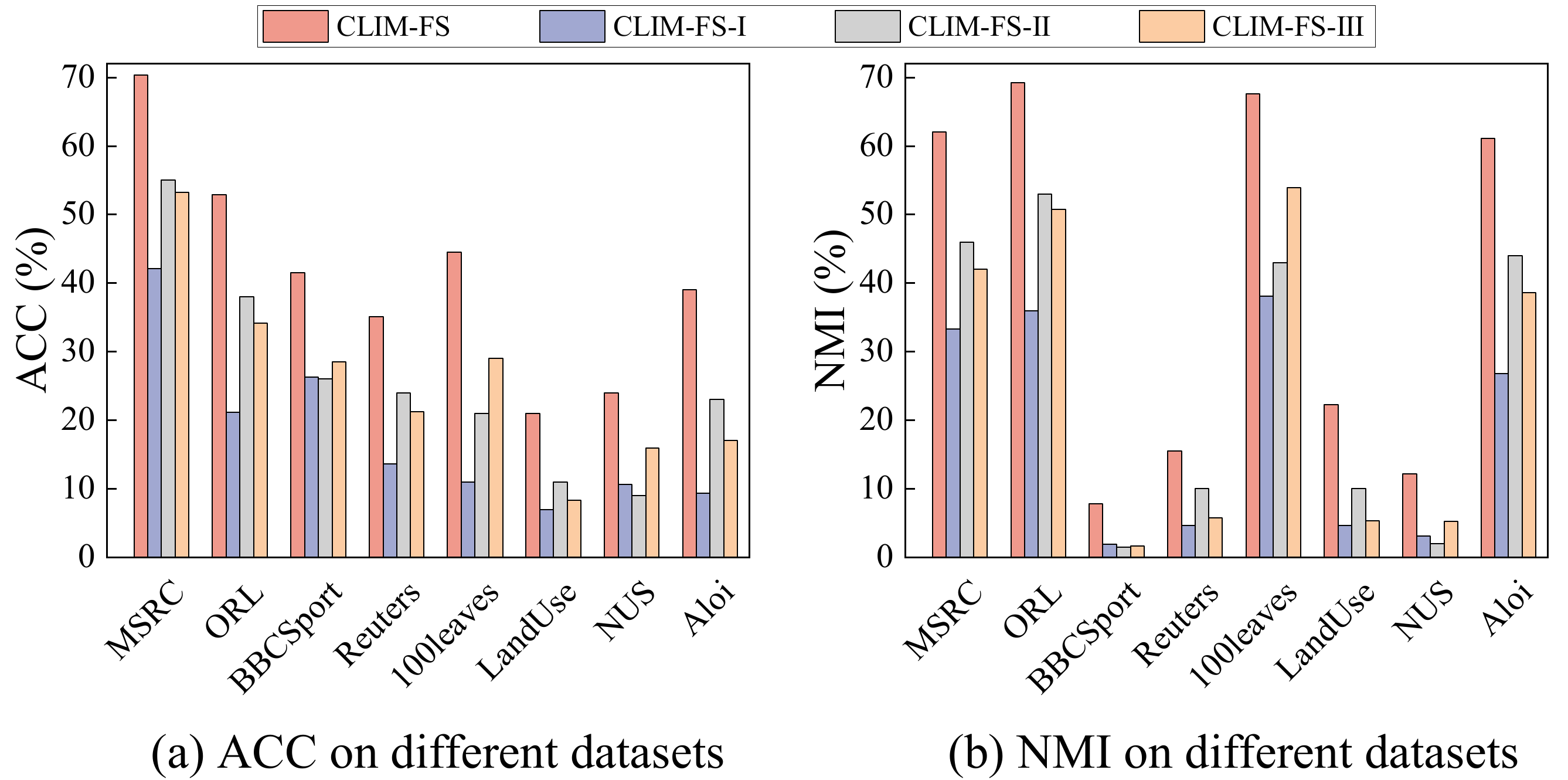}
\caption{Average performance comparison of CLIM-FS and its three variants in terms of ACC and NMI.}\label{ablation}
\end{figure}
\section{Conclusions}\label{sec:Conclusions}
In this paper, we proposed a novel IMUFS method, referred to as CLIM-FS. Unlike existing IMUFS approaches, which were limited to addressing view-missing issues, CLIM-FS tackled the more general mixed-missing problem by integrating feature selection with the imputation of missing views and variables within a unified learning framework. Moreover, CLIM-FS leveraged consensus cluster structures and cross-view local geometric structures to facilitate collaborative learning between feature selection and data imputation. We also provided a theoretical analysis to clarify this collaborative learning mechanism. Furthermore, we developed an iterative optimization algorithm with proven theoretical convergence to solve the proposed model. Experimental results on eight real-world datasets demonstrated that CLIM-FS outperformed state-of-the-art methods. In future work, we will extend CLIM-FS to handle streaming multi-view data, enabling incremental updates to the feature selection model and imputing missing data as new streams arrive, without requiring retraining from scratch.

\end{document}